\long\def\@makecaption#1#2{%
\vskip\abovecaptionskip
\sbox\@tempboxa{#1. #2}%
\ifdim \wd\@tempboxa >\hsize
#1. #2\par
\else
\global \@minipagefalse
\hb@xt@\hsize{\box\@tempboxa\hfil}%
\fi
\vskip\belowcaptionskip}
\definecolor{darkblue}{rgb}{0.0,0.0,0.55}
\newcommand{\arccosh}{{\rm arcosh}}
\newcommand{\norm}[1]{\|{#1}\|}
\newcommand{\ip}[2]{\langle {#1},\, {#2} \rangle}
\newtheorem{theorem}{Theorem}[section]
\newtheorem{lem}[theorem]{Lemma}
\begin{document}
\noindent

\bibliographystyle{plain}

\title{Neighborhood Growth Determines Geometric Priors \\ for Relational Representation Learning}

\authorname{Melanie Weber}
\authoraddr{Princeton University \\ mw25@math.princeton.edu}

\maketitle

\keywords
Embeddings, Representation Learning, Discrete Curvature, Hyperbolic Space

\abstract
The problem of identifying geometric structure in heterogeneous, high-dimensional data is a cornerstone of representation learning. While there exists a large body of literature on the embeddability of canonical graphs, such as lattices or trees, the heterogeneity of the relational data typically encountered in practice limits the applicability of these classical methods. In this paper, we propose a combinatorial approach to evaluating embeddability, i.e., to decide whether a data set is best represented in Euclidean, Hyperbolic or Spherical space. Our method analyzes nearest-neighbor structures and local neighborhood growth rates to identify the geometric priors of suitable embedding spaces. For canonical graphs, the algorithm's prediction provably matches classical results. As for large, heterogeneous graphs, we introduce an efficiently computable statistic that approximates the algorithm's decision rule. We validate our method over a range of benchmark data sets and compare with recently published optimization-based embeddability methods.

\section{Introduction}
A key challenge in data science is the identification of geometric structure in high-dimensional data. Such structural understanding is of great value for designing efficient algorithms for optimization and learning tasks. Classically, the structure of data has been studied under an Euclidean assumption.
The simplicity of vector spaces and the wide range of well-studied tools and algorithms that assume such structure make this a natural approach. However, lately, it has been recognized that Euclidean spaces do not necessary allow for the most `natural' representation, at least not in the low-dimensional regime. Recently, the representation of data in hyperbolic space has gained significant interest~\cite{NK17,chamberlain,sala,poincare-glove}. The intrinsic hierarchical structure of data sets ranging from social networks to wordnets has been related to ``tree-likeness'' and in turn to hyperbolic embeddability, since trees embed with low distortion into hyperbolic space~\cite{sarkar}. On the other hand, there is a long tradition for spherical embeddings in computer vision and shape analysis, where volumetric data is efficiently represented in spherical space.

In this work, we study the question of embeddability in the context of relational representation learning: For a given set of pairwise similarity relations, we want to determine the geometric priors of an embedding space that reflects the intrinsic structure of the data. Optimization-based embeddability methods~\cite{sala, gu} rely on \emph{Multi-dimensional Scaling} (\emph{MDS}), that require performing large-scale minimization tasks to determine suitable embedding parameters. Furthermore, such methods require an a priori fixed embedding dimension. Here, we introduce a purely combinatorial approach, that efficiently determines suitable priors through a direct analysis of the data's discrete geometry without a priori dimensionality assumptions. 

The paper is structured as follows: We will first analyze the relation between embeddability and neighborhood growth rates. Expanding neighborhoods (\emph{exponential} growth) exhibit tree-like properties and contribute to the hyperbolicity of the data set. On the other hand, cycles have a contracting effect, as they slow down local growth. Therefore, slowly expanding neighborhoods are an indicator of good embeddability into Euclidean (\emph{linear} growth) or Spherical space (\emph{sublinear} growth). To extend this framework from canonical graphs to heterogeneous relational data, we introduce a regularization that ensures uniform node degrees and therefore allows for a direct comparison of growth in diverse graph neighborhoods. We then introduce a statistic (\emph{3-regular score}) that aggregates local growth information across the graph. Based on the 3-regular score, we determine the geometric priors of the most suitable embedding space. For canonical graphs ($N$-cycles, ($\sqrt{N} \times \sqrt{N}$)-lattices and $b$-ary trees) we give a proof that the approach matches classical embeddability results. Furthermore, we establish a relation between the 3-regular score and discrete Ricci curvature~\cite{forman,Ol}, a concept from Discrete Geometry that has been linked to embeddability~\cite{WSJ2}.

The introduced method is purely combinatorial with a computational complexity linear in the average neighborhood size multiplied by the number of nodes. Moreover, as a local analysis, it can be efficiently parallelized. 

\subsection{Related Work}
\textbf{Embeddings for Representation Learning.} 
The theoretical foundation of Euclidean embeddability has been layed out by~\cite{bourgain} and~\cite{JL}. Here, the relation of \emph{intrinsic} and \emph{metric dimension} is of special interest: One can show with a volume argument that data $X$ with intrinsic dimension $dim (X)$ can be embedded with metric dimension $\Theta \left(	 \frac{dim(X)}{\log(\alpha)}\right)$ and distortion $\alpha$ into Euclidean space.~\cite{abraham2} study this relation algorithmically and derive distortion bounds. Recently, optimization-based hyperbolic embeddings have gained a surge of interest in the representation learning community.~\cite{NK17,NK18,chamberlain,poincare-glove} proposed optimization-based frameworks for embedding similarity data into hyperbolic space.~\cite{sala} analyze representation trade-offs in hyperbolic embeddings with varying geometric priors.~\cite{gu} introduce mixed-curvature embeddings by studying embeddability onto product manifolds. A related approach by~\cite{WN18} explores connections between graph motifs and hyperbolic vs. spherical embeddability.

\textbf{Spectral Approaches.}
~\cite{wilson} propose a spectral approach that determines embedding parameters by minimizing the magnitude of the first (spherical case) or second (hyperbolic case) eigenvalue. In this approach, the sign of the curvature is a hyperparameter of the objective and has to be prior known. However, the framework is only valid for isometrically embeddable data, leading to inaccuracies in heterogeneous data. In addition, spectral methods have limited scalability on large-scale data.

\textbf{Discrete Curvature.} 
In addition to spectral approaches, discrete curvature has recently gained interest as a structural graph characteristic. Gromov's $\delta$-hyperbolicity~\cite{gromov}, a discrete notion of sectional curvature, has been used to study the hyperbolicity of relational data~\cite{mahoney,gu}. Discrete notions of Ricci curvature were studied as graph characteristics~\cite{Ol,WSJ1,WSJ2}, providing insight into the local geometry of the underlying relational data.


\subsection{Contributions}
In the present paper, we connect the structure of nearest neighbor relations to the intrinsic geometry of relational data. We argue that the growth rates of graph neighborhoods serve as a proxy for the geometric priors of a suitable embedding space. To account for the heterogeneity of complex data sets, we perform a regularization that allows for a low-cost embedding of any graph into a 3-regular graph quasi-isometrically. In the regularized setting, where node degrees are uniform, we directly compare local neighborhood growth rates and deduce the curvature of the embedding space. 

We show that our classification scheme matches theoretical results for canonical graphs. Furthermore, we establish a relation to discrete Ricci curvature. For analyzing complex, heterogeneous data sets as typically encountered in ML applications, we introduce a statistic (\emph{3-regular score}) that aggregates local growth information and approximates the algorithm's decision rule. A series of validation experiments on classic benchmark graphs and real-world data sets demonstrates the applicability of the proposed approach. Finally, we compare our method to recently published embeddability benchmarks, validating that the 3-regular score predicts the lowest-distortion embedding.

\section{Background and Notation}
\subsection{Model Spaces}
We consider canonical Riemannian manifolds with constant curvature as embedding spaces, which can be characterized through the following set of \emph{model spaces} $\lbrace M_{\kappa}^d \rbrace$:
\begin{enumerate}
\item $M_{0}^n=\mathbb{R}^n$ denotes the canonical \textbf{Euclidean space} with the inner product $\ip{u}{v}_{E}=\sum_{i=1}^n u_i v_i$ that gives rise to the Euclidean norm $\norm{v}_E=\sqrt{v_1^2 + \dots + v_n^2}$ and the metric $d_{E}(u,v)=\norm{u-v}_E$.
\item The \textbf{$n$-sphere} $M_{\kappa>0}^n =  \mathbb{S}^n= \lbrace	 v \in \mathbb{R}^{n+1}: \norm{v}_E = \sqrt{\kappa}	\rbrace$ is an embedded submanifold of the $\mathbb{R}^{n+1}$ with constant positive curvature. A canonical metric is given by $d_S (u,v)={\rm arcos} (\ip{u}{v}_E)$.
\item The \textbf{hyperboloid} $M_{\kappa<0}^n =  \mathbb{H}^n = \lbrace	 v \in \mathbb{R}^{n+1}: \norm{v}_H = \sqrt{-\kappa}, v_1>0	\rbrace$ is 
a manifold with constant negative curvature. It is defined with respect to the \emph{Minkowski inner product}
\begin{align*}
\ip{u}{v}_H &= u^T {\rm diag}(-1, 1, \dots, 1) v \\
&= -u_1 v_1 + u_2 v_2 + \dots + u_n v_n \; ,
\end{align*}
which gives rise to the hyperbolic metric $d_H(u,v)={\rm acosh}(-\ip{u}{v}_H)$ and norm $\norm{u}_H = \sqrt{\ip{u}{v}_H}$.
\end{enumerate}
Here, we focus on the canonical model spaces $M_{0,\pm 1}$. Table~\ref{tab:model-spaces} summarizes important geometric properties that will be used in the following sections. Note that $M_{\pm 1}$ can be easily generalized to arbitrary curvatures ($\vert \kappa \vert \neq 1$) by multiplying the respective distance functions by $\frac{1}{\sqrt{\vert \kappa \vert}}$. In the following, we will drop the subscript $E$ when referring to the Euclidean notions. For a more comprehensive overview on model spaces, see, e.g.~\cite{BH99}.
\begin{table*}[t]
\centering
\begin{small}
\begin{tabular}{llll}
\toprule
 & \textbf{Euclidean} \(\mathbb{R}^d\) & \textbf{Spherical} \(\mathbb{S}^d\) & \textbf{Hyperboloid} \(\mathbb{H}^d\)\\
\midrule
Space & \(\mathbb{R}^n\) & \(\lbrace x \in \mathbb{R}^{n+1} : \ip{x}{x} = 1\rbrace\) & \(\lbrace x \in \mathbb{R}^{n+1} : \ip{x}{x}_H = -1, x_0 > 0 \rbrace\)\\
\(\ip{u}{v}\) & \(\sum_{i=1}^n u_i v_i\) & \(\sum_{i=1}^n u_i v_i\) & \(-u_0 v_0 + \sum_{i=1}^{n} u_i v_i\)\\
\(d(u,v)\) & \(\sqrt{\ip{u - v}{u - v}}\) & \(\arccos(\ip{u}{v})\) & \(\arccosh(-\ip{u}{v}_H)\)\\
Curvature & \(\kappa = 0\) & \(\kappa = 1\) & \(\kappa = -1\)\\
Canonical graph & ($\sqrt{N} \times \sqrt{N}$)-lattice & $N$-cycle & regular $N$-tree\\
\bottomrule
\end{tabular}
\caption{Geometric properties of model spaces.}
\label{tab:model-spaces}
\end{small}
\end{table*}

\subsection{Graph Motifs and Local Topology}
The present paper focuses on relational data, i.e., we assume access to a measure of similarity between any two elements. Natural representations of such data are graphs $G=\lbrace V, E \rbrace$, where $V$ denotes the set of vertices or nodes (representing data points) and $E$ the set of edges (representing relations). Additional features may be given through weights on the edges which we encode in the weight functions $\omega_E(e): E \rightarrow \mathbb{R}$. 

The importance of graph motifs for understanding the higher-order structure of graphs has long been recognized and intensely studied~\cite{mahoney,olhede,verbeek-suri}. Motifs are commonly defined as characteristic local connectivity patterns that occur in varying sizes and frequencies. While there is no canonical classification, trees and cycles have emerged as prevalent motifs in the study of network topology, due to having the greatest topological stability (i.e., the highest Euler characteristic)~\cite{olhede}. A random walk initiated at the root of a tree will never return to its point of origin, but expand into space. On the other hand, a random walk within a cycle is guaranteed (or, in a circle with outgoing connections, likely) to return to its origin, introducing a local contraction. This naturally relates to local growth rates in graph neighborhoods: While trees intrinsically encode exponential growth, cycles introduce a local contraction, resulting in sublinear growth rates. We will connect these ideas with the problem of embeddability.

\subsection{Embeddability}
An \emph{embedding} between metric spaces $(X_1,d_1)$ and $(X_2,d_2)$ is described as a map $\phi: X_1 \rightarrow X_2$. Here, we consider embeddings of relational data into canonical model spaces, i.e. we want to embed the graph metric $(V,d_G)$ of a (weighted or unweighted) graph $G=\lbrace V, E \rbrace$ using a map $\phi: V \rightarrow \mathcal{M}_{0, \pm 1}^d$, where $d_G$ denotes the usual path distance metric. The goodness of an embedding is measured in terms of distortion. We denote the \emph{additive} distortion $c_A \geq 0$ of the map $\phi$ as
\begin{align*}
\vert d_G (u,v) - d_{\mathcal{M}} (\phi(u), \phi(v)) \vert \leq c_A \quad \forall  u,v \in V \; ,
\end{align*}
and the \emph{multiplicative distortion} $c_M \geq 0$ as
\begin{align*}
d_{\mathcal{M}} (\phi(u), \phi(v)) &\leq d_G (u,v) \\ 
&\leq c_M d_{\mathcal{M}} (\phi(u), \phi(v)) \quad \forall  u,v \in V \; .
\end{align*}
Note that for an isometric map $c_A = 0$ and $c_M =1$. 

While little is known about the embeddability of large, heterogeneous graphs, there exists a large body of literature on the embeddability of canonical graphs. The, to our knowledge, best known results for multiplicative distortion are summarized in Table~\ref{tab:embed}. In the following we develop a computational method that applies not only to canonical graphs, but to any relational data set.
\begin{table*}[t]
\centering
\begin{small}
\begin{tabular}{llll}
\toprule
 & \textbf{Euclidean} \(\mathbb{R}^d\) & \textbf{Spherical} \(\mathbb{S}^d\) & \textbf{Hyperboloid} \(\mathbb{H}^d\)\\
\midrule
($\sqrt{N} \times \sqrt{N}$) - lattice & $c_M \leq O(1) \; ^{a)}$ & - & $c_M \geq O(\sqrt{N}/ \log(N))  \; ^{a)}$ \\
$N$-cycle & $c_M \leq O(1) \; ^{a)}$ & $c_M \leq O(1)  \; ^{a)}$ & $c_M \geq O(N/ \log(N))  \; ^{a)}$ \\
$b$-regular tree (size $N$) & $c_M \leq O(N^{\frac{1}{d-1}}) \;^{c)}$ & - & $c_M \leq O(1+\epsilon)\; ^{b)}$ \\
\bottomrule
\end{tabular}
\caption{{\small Known results on embeddability of canonical graphs. For $^{a)}$ see~\cite{verbeek-suri}, for $^{b)}$ see~\cite{sarkar} and $^{c)}$~\cite{gupta99}.}}
\label{tab:embed}
\end{small}
\end{table*}
%

\section{Methods}
We determine the geometric priors of a suitable embedding space with a two-step method, (1) by performing a \emph{regularization} that enforces uniform node degrees while preserving structural information and (2) by analyzing local neighborhood growth rates to determine the dominating geometry (\emph{3-regular score}).

\subsection{Regularization}
\begin{figure*}[t]
\centering
\includegraphics[width=0.6\textwidth]{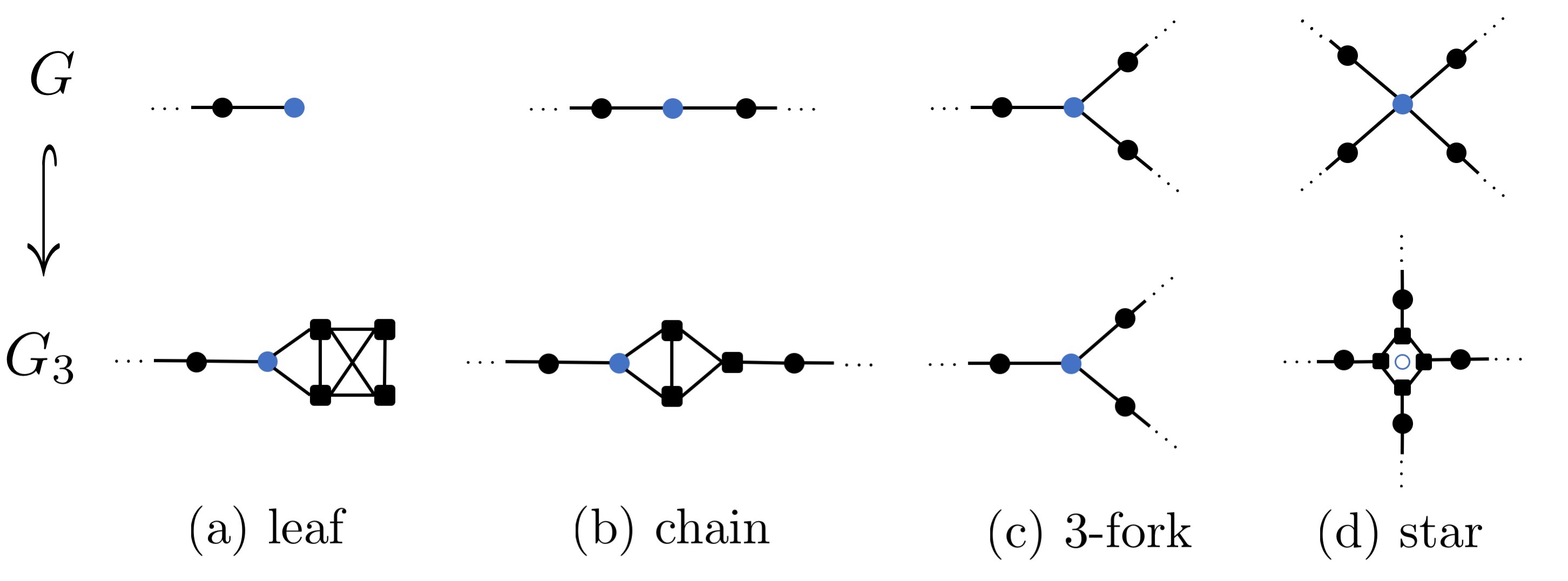}
\caption{Regularization $G \hookrightarrow G_3$ following~\cite{bermudo}. A vertex $v \in V$ is called \emph{leaf}, if it has degree 1, \emph{chain} if it has degree 2, \emph{3-fork}, if it has degree 3 and \emph{star} otherwise. The transformation maps each vertex and the edges connecting to its direct neighbors onto the respective three-regular structure as predicted by the vertex' degree. Circles represent original vertices, squares auxiliary vertices that have been added to enforce uniform node degrees.}
\label{fig:G-G3}
\end{figure*}
Relational data as typically encountered in data science applications is very heterogeneous, making it difficult to draw a conclusion on the global geometry from local analysis. Our first step is therefore the regularization of the graph's connectivity structure that will allow for a more efficient comparison of local neighborhood growth rates and, in turn, the local geometry. We will use throughout the paper the following (conventional) notation: When analyzing the neighborhood of a vertex $v \in V$, we say that $v$ is the \emph{root} or \emph{center} of the neighborhood. Neighborhood directionality is always assumed from the root outwards, the start of an outward facing edge is called \emph{parent}, the end \emph{child}. The root has no parent.

We utilize a quasi-isometric embedding~\cite{bermudo} that allows for embedding any (connected) graph into a three-regular graph, i.e. a graph with uniform node degrees (${\rm deg}(v)=3$ for all $v \in V$). The regularization algorithm is shown schematically in Fig.~\ref{fig:G-G3}, for more details see Appendix A. One can show the following bound on the distortion induced by this transformation:
\begin{theorem}[~\cite{bermudo}]\label{thm:bermudo}
$G \hookrightarrow^\phi G_3$ is a $(\epsilon + 1, \epsilon)$-quasi-isometric embedding, i.e. $c_M=O(1+ \epsilon)$ and $c_A=O(\epsilon)$. 
\end{theorem}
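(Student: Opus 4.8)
The plan is to verify the two defining inequalities of a $(\lambda,c)$-quasi-isometric embedding with $\lambda=\epsilon+1$ and $c=\epsilon$, namely $\tfrac{1}{\epsilon+1}\,d_G(u,v)-\epsilon \le d_{G_3}(\phi(u),\phi(v)) \le (\epsilon+1)\,d_G(u,v)+\epsilon$ for all $u,v\in V$, and then read off $c_M=O(1+\epsilon)$ and $c_A=O(\epsilon)$ directly from the distortion definitions of Section~2.3. The entire argument reduces to controlling how the local gadgets of Fig.~\ref{fig:G-G3} distort distances, so the first task is to fix, for each vertex type, an explicit replacement gadget whose \emph{weighted} diameter is bounded by $\epsilon$ independently of the vertex degree. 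For a \emph{leaf} or \emph{chain} I attach a small auxiliary gadget (a weighted triangle, resp.\ a short pendant structure of auxiliary vertices) that raises the total degree to exactly $3$; a \emph{3-fork} is left untouched; and a \emph{star} of degree $k\ge 4$ is replaced by a balanced binary-tree-like gadget $T_v$ whose $k$ boundary vertices carry the $k$ original incident edges, augmented so that after reattachment every gadget vertex has degree exactly $3$. Assigning the auxiliary (square) edges weights that decrease geometrically toward the boundary forces $\mathrm{diam}_\omega(T_v)\le\epsilon$ for every gadget regardless of $k$; I would then verify that the resulting $G_3$ is connected and $3$-regular and that $\phi$ sends each $v$ to its designated gadget representative.

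For the \textbf{upper bound}, fix a $G$-geodesic $u=w_0,w_1,\dots,w_\ell=v$ with $\ell=d_G(u,v)$ and lift it edge by edge: each original edge $(w_{i-1},w_i)$ is a single weight-$1$ edge of $G_3$ between a boundary vertex of $T_{w_{i-1}}$ and one of $T_{w_i}$, and consecutive lifted edges are spliced by traversing the gadget $T_{w_i}$ at cost at most $\mathrm{diam}_\omega(T_{w_i})\le\epsilon$. Summing yields a $\phi(u)$--$\phi(v)$ walk of weighted length at most $\ell+(\ell+1)\epsilon \le (1+\epsilon)\,d_G(u,v)+\epsilon$. The essential point is that each gadget detour is absorbed \emph{multiplicatively}, one $\le\epsilon$ detour per unit-length edge, so the costs do not accumulate catastrophically with the (possibly large) degrees.

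For the \textbf{lower bound}, I would introduce the collapse map $\pi:V(G_3)\to V$ sending every gadget vertex back to its original center, and observe that a $G_3$-path projects to a $G$-walk whose length equals the number of cross-gadget (original) edges it uses, while its internal gadget moves collapse to staying put. Hence this walk is no longer than the weighted length of the $G_3$-path. Since $\pi\circ\phi=\mathrm{id}$, we obtain $d_G(u,v)\le d_{G_3}(\phi(u),\phi(v))$ with no additive loss at all, which trivially dominates the required $\tfrac{1}{\epsilon+1}d_G(u,v)-\epsilon$. Combining the two bounds gives the $(\epsilon+1,\epsilon)$-quasi-isometry, and the distortion claims follow.

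The main obstacle is entirely in the gadget design of the first step: one must exhibit a single family of gadgets that is \emph{simultaneously} (i) exactly $3$-regular once the original edges are reattached, and (ii) of weighted diameter $\le\epsilon$ uniformly in $k$. These requirements are in tension — a naive balanced binary tree leaves its boundary vertices with the wrong degree, so the gadget boundary must be engineered carefully, while the uniform diameter bound forces the geometric edge weighting. Everything else is a routine lift-and-project argument; it is precisely the uniform-small-diameter-with-exact-$3$-regularity construction that earns the constants $(\epsilon+1,\epsilon)$, and I expect this to be where the bulk of the verification lies.
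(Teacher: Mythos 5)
You are attempting to prove substantially more than the paper does. Theorem~\ref{thm:bermudo} is imported wholesale from~\cite{bermudo}: the only argument the paper itself supplies (Appendix~A) is the two-line derivation of $c_A=O(\epsilon)$ and $c_M=O(1+\epsilon)$ from the already-given inequality $d_G \le d_{G_3} \le (\epsilon+1)\,d_G+\epsilon$, absorbing a factor of ${\rm diam}(G)$ into the $O(\cdot)$. Your proposal, by contrast, tries to establish the quasi-isometry itself, and only gestures at that final (easy) translation into $c_A$ and $c_M$.

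As a from-scratch proof, your argument has a genuine gap, and you have located it yourself: the whole $(\epsilon+1,\epsilon)$ bound hinges on exhibiting a gadget family that is simultaneously exactly $3$-regular after reattachment and of weighted diameter at most $\epsilon$ uniformly in the degree $k$, and you never construct such a family --- you assert that geometrically decaying edge weights ``force'' the diameter bound while conceding that the balanced binary tree fails the degree condition at its boundary. Until that object is produced, the upper-bound lift (one $\le\epsilon$ detour charged per traversed edge, giving $\ell+(\ell+1)\epsilon$) is unjustified. Moreover, the regularization the paper actually uses (Alg.~\ref{alg.reg}) replaces a degree-$k$ star not by a tree gadget but by a \emph{ring} of $k$ auxiliary vertices with uniform edge weights $\epsilon/4$, whose internal diameter is $\Theta(k\epsilon)$, not $\le\epsilon$; plugging that gadget into your lift yields a degree-dependent multiplicative constant rather than $\epsilon+1$, so your argument does not prove the theorem for the construction it is meant to describe. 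Your lower bound also needs care: in Alg.~\ref{alg.reg} each original edge weight is halved and split across the two incident gadgets, so the claim that the collapse map $\pi$ loses nothing additively must be checked against that reweighting rather than assumed for unit-weight cross-gadget edges.
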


\subsection{Estimating local neighborhood growth rates}
In order to decide the geometry of a suitable embedding space, we want to analyze \emph{neighborhood growth rates}. Consider first a continuous, metric space ($\mathcal{X},d_{\mathcal{X}}$). The $\delta$-neighborhood of a point $x \in \mathcal{X}$ is defined as the set of points within a distance $\delta$, i.e. 
\begin{equation}
\mathcal{B}_\delta(x) = \lbrace y \in \mathcal{X}: \; d_{\mathcal{X}}(x,y) \leq \delta \rbrace \; .
\end{equation} 
In Euclidean space, the volume of $\mathcal{B}_\delta(x)$ is growing at a polynomial rate in $\delta$. However, in hyperbolic space, the volume growth is exponential. Therefore, the local volume growth of neighborhoods serves as a proxy for the space' global geometry.

In discrete space, instead of analyzing volume growth, we characterize the local growth of neighborhoods. We denote the $r$-neighborhood of a vertex $v \in V(G)$ as
\begin{equation}
\mathcal{N}_r (v) = \lbrace u \in V(G): d_G (u,v) \leq r	\rbrace \; .
\end{equation} 
We say that $\mathcal{N}_r (v)$ is \emph{exponentially expanding}, if it grows exponentially in $r$ and \emph{linearly expanding}, if it grows at least linearly in $r$. Otherwise, we call $\mathcal{N}_r (v)$ \emph{sublinearly expanding}. Thanks to the regularized structure of our graphs, we can quantify precisely the corresponding neighborhood growth laws:
\begin{enumerate}\label{list:growth-rates}
\item \emph{exponentially expanding}:\\ $\gamma_{EE}(v,R) = \vert v \vert  + \sum_{r=1}^R 3 \cdot 2^{r-1}$;
\item \emph{linearly expanding}:\\ $\gamma_{LE} (v,R) = \vert v \vert + \sum_{r=1}^R 3 r$. 
\end{enumerate}
Here, $\vert v \vert$ denotes the size of the root structure, i.e., $\vert v \vert = 1$, if ${\rm deg} (v) \leq 3$ and $\vert v \vert = {\rm deg}(v)$ otherwise, due to the transformation of star-nodes into three-regular rings (see Fig.~\ref{fig:G-G3} and Appendix A). Then we say that the $R$-neighborhood of a vertex $u \in V(G)$ is exponentially expanding, if $\vert \mathcal{N}_R (u) \vert \geq \gamma_{EE}(R)$, linearly expanding, if $\vert \mathcal{N}_R (u) \vert \geq \gamma_{LE}(R)$ and sublinearly expanding otherwise. 
%
For canonical graphs, we get the following neighborhood growth rates (a proof can be found in Appendix B):
\begin{theorem}[Neighborhood growth in canonical graphs]\label{prop:growth-rates}
For $1 \leq R \ll N$, every $R$-neighborhood in (i) a $b$-regular tree is exponentially expanding,
(ii) an ($\sqrt{N} \times \sqrt{N}$)-lattice is linearly expanding and
(iii) an $N$-cycle is sublinearly expanding.
\end{theorem}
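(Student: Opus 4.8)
The plan is to reduce the statement to three explicit shell-counting computations and to compare, in each case, the cumulative count $|\mathcal{N}_R(v)|$ against the two reference curves $\gamma_{EE}(v,R) = |v| + \sum_{r=1}^R 3\cdot 2^{r-1}$ and $\gamma_{LE}(v,R) = |v| + \sum_{r=1}^R 3r$. The observation organizing the whole argument is that these curves are precisely the cumulative neighborhood sizes of the two canonical $3$-regular reference structures: $\gamma_{EE}$ is the ball size in the infinite $3$-regular tree, and $\gamma_{LE}$ its linear analogue. Hence after the regularization $G \hookrightarrow G_3$ of Theorem~\ref{thm:bermudo}, which renders all degrees uniform, the classification reduces to comparing like with like. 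Throughout I would use the hypothesis $1 \le R \ll N$ to guarantee that the chosen vertex lies in the bulk, so that its shells attain their full interior size and no boundary truncation or wrap-around occurs; for the vertex-transitive cases (cycle, infinite tree) every vertex is a bulk vertex automatically.

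For the $N$-cycle every vertex has degree $2$, so $|v|=1$ and, for $2R < N$, the shell at radius $r$ contains exactly two vertices; thus $|\mathcal{N}_R(v)| = 2R+1$. A one-line comparison shows $2R+1 < \gamma_{LE}(v,R) = 1 + \tfrac{3}{2}R(R+1)$ for every $R \ge 1$, so the neighborhood fails even the linear threshold and is sublinearly expanding. For the $b$-regular tree I would count shells by branching: the root has $b$ neighbors and every subsequent vertex contributes $b-1$ children, giving $|\mathcal{N}_R(v)| = 1 + b\sum_{r=1}^R (b-1)^{r-1}$. For $b=3$ this equals $\gamma_{EE}(v,R)$ exactly, confirming that the exponential threshold is calibrated to the $3$-regular tree; for $b>3$ the count is strictly larger, and since the regularization replaces each star by a $3$-regular ring while leaving the branching into $b$ subtrees intact, the regularized tree still dominates the reference. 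In either case $|\mathcal{N}_R(v)| \ge \gamma_{EE}(v,R)$ and the neighborhood is exponentially expanding.

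The lattice is the delicate case and carries a two-sided obligation: I must show $\gamma_{LE}(v,R) \le |\mathcal{N}_R(v)| < \gamma_{EE}(v,R)$. Using the graph ($\ell_1$) distance, an interior vertex of the $(\sqrt{N}\times\sqrt{N})$-lattice has shell size $4r$ at radius $r$, so $|\mathcal{N}_R(v)| = 1 + 2R(R+1)$ before regularization. The upper bound $|\mathcal{N}_R(v)| < \gamma_{EE}(v,R)$ is a routine quadratic-versus-exponential estimate valid for all but the very smallest $R$. The lower bound is the subtle point: because a lattice vertex is a star (${\rm deg}(v)=4$, so $|v|=4$), the reference $\gamma_{LE}$ is inflated at small radii, and the \emph{unregularized} count already fails the linear threshold at $R=1$. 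This is exactly where regularization is indispensable: passing to $G_3$ endows the root with its $4$-vertex ring and redistributes the neighbors so that the cumulative count overtakes $\gamma_{LE}$; I would verify the comparison on the regularized shell counts, whose leading quadratic coefficient exceeds the $\tfrac{3}{2}$ of $\gamma_{LE}$.

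The main obstacle I anticipate is precisely this lattice lower bound: it is the only step that genuinely depends on the regularization rather than on the raw graph, and it requires tracking the (slightly awkward) shell sizes of the regularized grid at small radii, together with the bookkeeping needed to confirm that \emph{every} interior vertex --- not merely a generic one --- attains the bound once $R \ll \sqrt{N}$ keeps it clear of the grid boundary. The tree and cycle cases, by contrast, are vertex-transitive and reduce to closed-form geometric and arithmetic sums, so I expect them to be essentially immediate once the calibration of $\gamma_{EE}$ and $\gamma_{LE}$ to the $3$-regular references is made explicit.
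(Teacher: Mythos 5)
Your proposal follows essentially the same route as the paper: regularize, then compare explicit per-radius shell counts against the reference curves $\gamma_{EE}$ and $\gamma_{LE}$, which are indeed calibrated to the $3$-regular tree. Two points deserve attention. First, you carry out the cycle and tree counts on the \emph{raw} graphs and argue that regularization can only help; but the definitions of expansion (and the quantity $\vert v \vert$) live in $G_3$, and for a star of degree $b>3$ the ring replacement \emph{stretches} distances, so the $r$-shell of $G_3$ is not simply a superset of the $r$-shell of $G$ --- some original vertices move beyond radius $r$ and are traded for auxiliary ring vertices. The paper handles this by first arguing that the induced edge weights are (essentially) uniform and can be renormalized away, and then writing explicit periodic shell formulas in $G_3$ for each canonical graph (shells of size $2$ or $3$ for the cycle, $4\alpha(r)$ with $\alpha(r)\in\{r-1,r\}$ for the lattice, $b\,\alpha(r)$ with $\alpha(r)\in\{\tfrac{3}{2}2^{r-1},2^{r}\}$ for the tree); your domination argument for $b>3$ should be replaced by, or reduced to, such a count. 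Second, the small-$R$ failure you spotted for the lattice lower bound is real and not an artifact of your bookkeeping: the paper's own shell inequality $4\alpha(r)\ge 3r$ fails at $r=1$, and the appendix quietly restates the theorem with $3\le R\ll N$ in place of $1\le R\ll N$. Your insistence on the two-sided obligation for the lattice (that it must also fall below $\gamma_{EE}$) is a point the paper's proof omits entirely and is worth keeping.
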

Utilizing the link between neighborhood growth and global geometry that we discussed above, we introduce the following decision rule for the geometric prior of the embedding space (${\rm sign}(\kappa)$):
\begin{itemize}\label{eq:expand-contract}
\item If $\mathcal{N}_R(v)$ is \emph{exponentially expanding} $\forall v \in V$ and $1 \leq R \ll N$, assume ${\rm sign}(\kappa)<0$, i.e., embed into $\mathbb{H}^d$;
\item If $\mathcal{N}_R(v)$ is \emph{lineraly expanding} $\forall v \in V$ and $1 \leq R \ll N$, assume ${\rm sign}(\kappa)=0$, i.e., embed into $\mathbb{R}^d$;
\item If $\mathcal{N}_R(v)$ is \emph{sublinearly expanding} $\forall v \in V$ and $1 \leq R \ll N$, assume ${\rm sign}(\kappa)>0$, i.e., embed into $\mathbb{S}^d$.
\end{itemize}
Note that these result match known embeddability results for canonical graphs (see Table~\ref{tab:embed}). In the following section we introduce a statistic that allows for applying this decision rule to heterogeneous relational data also.

\subsection{3-Regular Score}
The heterogeneity commonly encountered in relational data makes it impossible to generalize global growth rates from a local analysis as in Thm.~\ref{prop:growth-rates}. We will typically find a mixture of local growth rates, that is not covered by the decision rule above. Instead, we focus on determining the globally \emph{dominating} geometry: We analyze growth rates locally and then compute an average across the graph, weighted by the size of the respective $R$-neighborhood. The resulting statistic, to which we refer as the \emph{3-regular score}, can be computed as follows:
\begin{align*}
A &=\sum_{v \in V} \sigma(v) \vert \mathcal{N}_R(v) \vert \; ; \\
\sigma(v) &=\begin{cases}
1, &\mathcal{N}_R(v) \; {\rm sublinearly \; expanding} \\
-1, &\mathcal{N}_R(v) \; {\rm exponentially \; expanding} \\
0, &{\rm otherwise} \; .
\end{cases}
\end{align*}
To determine the geometric priors, we apply the following decision rule:
\begin{itemize}
\item if $A>0$, assume ${\rm sign}(\kappa)>0$, i.e., embed into $\mathbb{S}^d$;
\item if $A < 0$, assume ${\rm sign}(\kappa)<0$, i.e., embed into $\mathbb{H}^d$;
\item and if $A \approx 0$, assume ${\rm sign}(\kappa)=0$, i.e., embed into $\mathbb{R}^d$.
\end{itemize}
For weighted networks, we perform the same regularization and computation of the 3-regular score, but replace in the growth rate estimations $3$ with the weighted node degree of the center. When determining $\vert \mathcal{N}_R(v) \vert$, i.e., the set of all neighbors up to distance $R$ from $v$, the metric is the weighted path distance $\tilde{d}_G$. Consequently, we count all neighbors $v'$ with $\tilde{d}_G(v,v') \leq R$.

The decision rule is motivated by locally aggregating neighborhood growth information. 
Hereby $\sigma$ encodes whether the neighborhood growth is locally \emph{exponential} (indicating hyperbolic space, i.e., 
${\rm sign}(\kappa) = -1$ and $\sigma=-1$), \emph{linear} (indicating Euclidean space, i.e., $\kappa = 0$ and $\sigma=0$) or \emph{sublinear}
(indicating spherical space, i.e., ${\rm sign}(\kappa)=1$ and $\sigma=-1$). 
Due to the heterogeneity of the graphs, we weigh the $\sigma$s by the size of the neighborhood ($\vert \mathcal{N}_R(v) \vert$) to give large neighborhoods a larger influence on the overall score. This is motivated by the fact that the "amount" of distortion incurred is proportional to the largest subgraph of another space's canonical motif: For instance, when embedding a graph into hyperbolic space, distortion is proportional to the size of the largest cycle by a Steiner node construction (see, e.g.,~\cite{verbeek-suri}). The resulting 3-regular score $A$ after reweighing will then depend on the size of the graph, in particular on the number of edges in the regularized graph $G_3$. Therefore, we normalize by dividing by the the number of edges in $G_3$, i.e., we compare $A/\# E(G_3)$ across data sets. The dependency of $A$ on $R$ is explicitly given through the weights $\vert \mathcal{N}_R(v) \vert$; $R$ is upper-bounded by the diameter of the graph. 

\subsection{Comparison with other discrete curvatures}
The 3-regular score is conceptually related to discrete notions of curvature, such as Gromov's $\delta$-hyperbolicity~\cite{gromov} or discrete Ricci curvature~\cite{forman,Ol}. While Gromov's $\delta$ captures by construction the hyperbolicity of a graph, discrete Ricci curvature is not restricted to negative values. In this section, we analyze the relationship between the 3-regular score and discrete Ricci curvature and compare the suitability of both concepts to measure embeddability.

In the following, we will think of data sets as graphs, where nodes represent the data points and edges the pairwise similarities between them. For simplicity, we only consider unweighted graphs. We consider both \emph{Ollivier-Ricci curvature} (${\rm Ric_O}$)~\cite{Ol} and \emph{Forman-Ricci curvature} (${\rm Ric_F}$)~\cite{forman,WSJ1} that have previously been analyzed in the context of large-scale data and complex networks. Although both curvatures are classically defined edge-based, we will use node-based expressions. Those can be derived by defining the Ricci curvature at a node as the aggregate curvature of its incoming and outgoing edges (see Appendix C for more details). For ${\rm Ric_O}$, consider
\begin{align*}
{\rm Ric_O} (u,v) = 1 - W_1 (m_u, m_v) \; ,
\end{align*}
where $W_1 (m_u, m_v)$ is the Wasserstein-1 distance that measures the cost of transporting mass from $u$ to $v$. $m_u=\frac{1}{{\rm deg}(u)}$ denotes the uniform measure on the neighborhood of $v$. The corresponding node-based notion is given by
\begin{align*}
{\rm Ric_O} (v) = \frac{1}{{\rm deg}} \sum_{(u,v)} {\rm Ric_O}(u,v) \; .
\end{align*}
${\rm Ric_F}$ of an edge $(u,v)$ is defined as
\begin{align*}
{\rm Ric_F} (u,v) = 4 - {\rm deg}(u) - {\rm deg}(v) \; ,
\end{align*}
the corresponding node-based expression as
\begin{align*}
{\rm Ric_F}(v) = \frac{1}{{\rm deg}} \sum_{(u,v)} {\rm Ric_F}(u,v) \; .
\end{align*}
The aggregation of local growth rates in the 3-regular score resembles the Ricci curvature's property of "locally averaging" sectional curvature. Note that both Ricci curvatures encode only structural information of the first and second neighbors, whereas the 3-regular score measures structural information from neighbors up to distance $R$.  Both ${\rm Ric_F}$ and the 3-regular score are very scalable due to their simple combinatorial notion. ${\rm Ric_O}$ has limited scalability on large-scale data, since the computation of Wasserstein distances requires solving a linear program for every edge. 

To evaluate whether Ricci curvature can select a suitable embedding space, we consider again canonical graphs. Due to their regular structure, the global average curvature is equal to the local curvature at any node in the graph. We derive the following results:\\

\begin{theorem}
\label{thm:ric}
At any node $v$, we have
\begin{enumerate}
\item ${\rm Ric_O}(v)<0$ and ${\rm Ric_F}(v)<0$ in a $b$-regular tree,
\item ${\rm Ric_O} \leq 0$ and ${\rm Ric_F}(v)<0$ in an ($\sqrt{N} \times \sqrt{N}$)-lattice, and
\item ${\rm Ric_O}=0$ and ${\rm Ric_F}=0$ in an $N$-cycle.
\end{enumerate}
\end{theorem}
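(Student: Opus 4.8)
The plan is to handle the two curvatures separately, since ${\rm Ric_F}$ depends only on the degree sequence whereas ${\rm Ric_O}$ requires an explicit optimal-transport computation. All three graphs are regular away from their boundary (the tree and the cycle are vertex-transitive, the lattice is homogeneous in its interior), so the global average curvature coincides with the edge curvature on a single representative edge. It therefore suffices to evaluate each curvature on one adjacent pair $(u,v)$ and then average over the $\deg(v)$ incident edges to recover the node-based quantities ${\rm Ric_O}(v)$ and ${\rm Ric_F}(v)$.

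For ${\rm Ric_F}$ the argument is pure degree counting. In a $b$-regular tree every edge has ${\rm Ric_F}(u,v) = 4 - 2b$, strictly negative for $b \geq 3$, so ${\rm Ric_F}(v) = 4 - 2b < 0$. In the lattice every adjacent pair has degree sum at least $5$ (a degree-$2$ corner is adjacent only to degree-$3$ nodes), so each edge contributes $4 - \deg(u) - \deg(v) \leq -1$ and the node average is strictly negative. In the $N$-cycle all degrees equal $2$, giving ${\rm Ric_F}(u,v) = 4 - 2 - 2 = 0$ on every edge, hence ${\rm Ric_F}(v) = 0$.

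For ${\rm Ric_O}$ I would, on a fixed edge $(u,v)$, write the uniform measures $m_u, m_v$ on the neighbor sets and bracket $W_1(m_u,m_v)$ from both sides. The upper bounds come from explicit couplings: on the cycle the two-point measures are moved by the unit translation (cost $1$); on the lattice interior the translation by the edge vector matches each neighbor of $u$ to a neighbor of $v$ at distance $1$ (cost $1$); on the tree one sends the mass at $v$ to a neighbor of $v$ (distance $1$), one neighbor of $u$ to $u$ (distance $1$), and the remaining $b-2$ neighbors of $u$ across the central edge to the outer neighbors of $v$ (distance $3$), for total cost $3 - 4/b$. The lower bounds are the crux. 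For the cycle and the lattice interior the supports of $m_u$ and $m_v$ are disjoint, so every pair of support points has graph-distance at least $1$ and any coupling costs $\sum_{i,j}\pi_{ij}\, d(\cdot,\cdot) \geq \sum_{i,j}\pi_{ij} = 1$; thus $W_1 = 1$ and ${\rm Ric_O}(u,v) = 0$. For the tree this trivial bound $W_1 \geq 1$ is not tight, so I would invoke Kantorovich--Rubinstein duality: the $1$-Lipschitz test function equal to $2$ on the outer neighbors of $u$, $1$ at $u$, $0$ at $v$, and $-1$ on the outer neighbors of $v$ is feasible on all pairwise distances and certifies $W_1 \geq 3 - 4/b$, whence $W_1 = 3 - 4/b$ and ${\rm Ric_O}(u,v) = -2 + 4/b = -2(b-2)/b < 0$ for $b \geq 3$.

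The main obstacle is certifying optimality of the exhibited plans, i.e. the lower bounds on $W_1$; the explicit Lipschitz dual witness settles the tree, and support-disjointness settles the cycle and the lattice interior. The remaining subtlety is the lattice boundary, where the degrees drop and the measures lose support points: there I would check separately that boundary and corner edges still satisfy $W_1 \geq 1$, which preserves the sign and explains the $\leq 0$ (rather than $=0$) appearing in the statement. Passing from edge values to the node-based ${\rm Ric_O}(v)$ and ${\rm Ric_F}(v)$ is then immediate by averaging, with regularity ensuring the local value equals the global average.
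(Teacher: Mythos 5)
Your proposal is correct, and for the Ollivier--Ricci part it takes a genuinely different route from the paper. The paper does not compute $W_1$ at all: it invokes the curvature inequalities of Jost and Liu (restated as a lemma in its appendix), using the fact that tree edges satisfy ${\rm Ric_O}(u,v)\leq 0$ for case (1), the triangle-count upper bound ${\rm Ric_O}(u,v)\leq \#(u,v)/\max\{{\rm deg}(u),{\rm deg}(v)\}=0$ for the triangle-free lattice in case (2), and both the upper and lower bounds of that lemma (which coincide at $0$ when ${\rm deg}(u)={\rm deg}(v)=2$) for the cycle in case (3). You instead evaluate $W_1$ from first principles with explicit couplings for the upper bounds and support-disjointness or a Kantorovich--Rubinstein dual witness for the lower bounds. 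Your approach is longer but buys something concrete: for the tree you obtain the exact value ${\rm Ric_O}(u,v)=-2(b-2)/b$, hence the \emph{strict} inequality ${\rm Ric_O}(v)<0$ asserted in item (1), whereas the paper's argument via the cited lemma only yields ${\rm Ric_O}(v)\leq 0$ there, so the paper's own proof is in fact slightly weaker than its claim and your computation closes that gap. Your treatment of the lattice boundary is also more careful than the paper's, which implicitly works only with interior (degree-$4$) vertices. The Forman--Ricci part is the same degree-counting argument in both, up to the paper's ambiguity between a ``$b$-regular'' tree (all degrees $b$, your reading) and a ``$b$-ary'' tree (degrees $b+1$, the reading in the paper's appendix); the sign conclusion is unaffected. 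One small caveat: for a \emph{finite} tree, vertex-transitivity fails at the leaves, so your reduction to a single representative edge should be read as applying to interior vertices, but the paper makes the same implicit assumption.
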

The proof follows from combinatorial arguments and (for ${\rm Ric_O}$) curvature inequalities~\cite{jost-liu}; it can be found in Appendix C. The theorem shows that Ricci curvature, similar to Gromov's $\delta$, correctly detects hyperbolicity, but cannot characterize structures with non-negative curvature. We conclude, that Ricci curvature is not suitable for model selection and that the 3-regular score has broader applicability.

\section{Experiments}
We have shown above, that in the case of canonical graphs, our approach's prediction matches known embeddability results. In this section, we want to experimentally validate that the \emph{3-regular score} determines suitable embedding spaces for complex, heterogeneous data. In the following, we report ``normalized'' 3-regular scores, meaning that we divide by the number of edges in the regularized graph multiplied by the mean edge weight. This adjusts for differences in the average neighborhood size and therefore allows for a comparison across data sets of varying sizes.
\begin{figure*}[t]
\begin{center}
\includegraphics[scale=0.17]{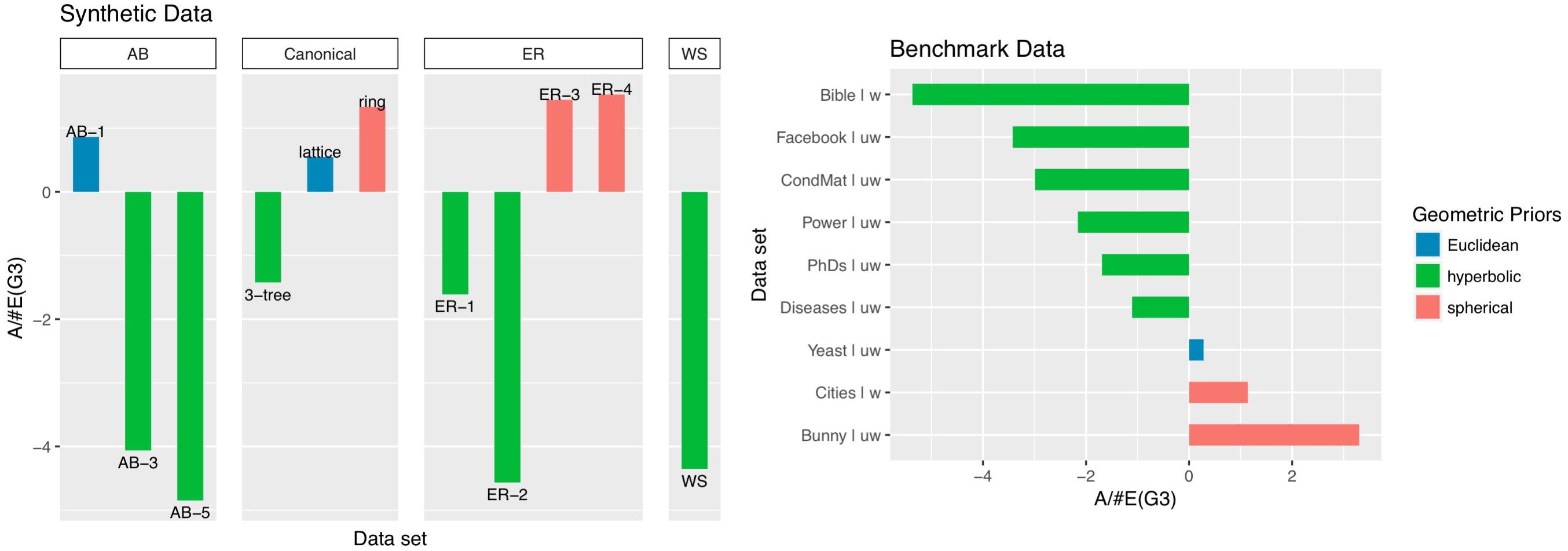} 
\caption{3-regular scores for synthetic graphs and relational benchmark data (w: weighted, uw: unweighted).}
 \label{fig:3reg}
\end{center}
\end{figure*} 
\paragraph{Data sets} We test our method on both synthetic graphs with known embeddability properties and benchmark data sets. For the former, we create data sets of similar size ($N := \vert V \vert \approx 1,000$) to allow for direct comparison.
First we generate an $N$-\textsc{Cycle}, an ($\sqrt{N} \times \sqrt{N}$)-\textsc{lattice} and a ternary \textsc{tree} ($b=3$) with $N$ nodes. We further sample from three classic network models: The random graph model (\textsc{ER})~\cite{ER}, the small world model (\textsc{WS})~\cite{WS} and the preferential-attachment model (\textsc{AB})~\cite{AB} with different choices of hyperparameters. We sample ten networks each and report the average 3-regular score to account for structural sampling variances. Next, we analyze some classic benchmark graphs (both weighted and unweighted) which were downloaded from the Colorado Networks Index~\cite{colorado-index}. The \textsc{Bunny} data was downloaded from the Stanford 3D Scanning Repository~\cite{bunny}. Finally, for validating our approach against recently published embeddability results, we analyze data sets used in~\cite{gu,NK17}, downloaded from the given original sources. We evaluate geographic distances between North American cities (\textsc{Cities}~\cite{cities}), PhD student-adviser relationships (\textsc{PhD},~\cite{phd}) and a citation networks (\textsc{CondMat},~\cite{snap}). 
\textsc{Cities} contains similarity data from which we created a nearest-neighbor graph, maintaining edges to the top $5 \%$ neighboring cities.

\paragraph{Results} 3-regular scores for all data sets are shown in Fig.~\ref{fig:3reg}. For canonical graphs, the 3-regular score matches both the theoretical results of our growth rate analysis (Thm.~\ref{prop:growth-rates}) and embeddability results in the literature (Tab.~\ref{tab:embed}). It is well known that \textsc{ER} undergoes phase transitions as the edge threshold increases.~\cite{mahoney} show that \textsc{ER} is not hyperbolic in the low edge threshold regime, but that hyperbolicity emerges with the giant component due to its locally tree-like structure. We analyze \textsc{ER} shortly above (\textsc{ER-3}) and below (\textsc{ER-4}) the giant threshold ($p=\frac{1}{N}$) as well as shortly above (\textsc{ER-2}) and below (\textsc{ER-1}) the connectivity threshold ($p=\frac{\log N}{N}$). Consistent with the theoretical result of~\cite{mahoney}, we observe hyperbolicity, if there is a large giant component (\textsc{ER-1}) or if the graph is connected (\textsc{ER-2}). For \textsc{AB} with linear attachment ($m=1$), the 3-regular score predicts a Euclidean embedding space to be most suitable, as opposed to hyperbolic embeddings for the case of superlinear attachment ($m>1$). This is again consistent with theoretical results on phase transitions in the \textsc{AB} model. The presence of detectable network communities, as found for instance in social networks, has been repeatedly linked to a locally tree-like structure~\cite{adcock,krioukov}. In agreement with this, the 3-regular score predicts good hyperbolic embeddability for both \textsc{WS} (a classic model for studying community structure) and the social network data sets \textsc{Facebook} and \textsc{PhD}. The wordnet \textsc{Bible} was found to embed best into hyperbolic space, in-line with the tree-likeness of such intrinsically hierarchical data.~\cite{sala} observed ``less hyperbolicity'' in biological networks, which matches our results for \textsc{Diseases} and \textsc{Yeast}. Finally, \textsc{Bunny}, a classic benchmark for spherical embeddings, is found to embed best into spherical space.

\paragraph{Validation and comparison with related methods} To validate our results, we compare our predicted geometric priors against recently published embeddability results by~\cite{gu, NK17}. Table~\ref{tab:validation} shows that the 3-regular score predicts the space with the smallest distortion for all benchmark data sets. Here, we follow the authors in reporting \emph{distortion} using the following statistics: The average distortion ${\rm D_{avg}}$, computed over all pairwise distances, and the structural distortion score ${\rm MAP}$ that measures the preservation of nearest-neighbor structures. Isometric embeddability is characterized by ${\rm D_{avg}}=0$ and ${\rm MAP}=1$. For more details, see Appendix D. 
{\renewcommand{\arraystretch}{2.5}
\begin{table*}[t]
\begin{footnotesize}
\label{tab:validation}
\begin{tabular}{llllll}
\toprule
 \textbf{Data set} & \textbf{3-regular score} & \textbf{dist} \(\mathbb{R}^{10}\) & \textbf{dist} \(\mathbb{S}^{10}\) & \textbf{dist} \(\mathbb{H}^{10}\) & \textbf{Method} \\
\midrule
Cities & \cellcolor{red!10}1.138 $\rightarrow \mathbb{S}^d$ & $D_{avg}=0.074$ & $\bm{{\rm D_{avg}}=0.060}$ & ${\rm D_{avg}}=0.093$ & Gu et al. \\
PhD students &  \cellcolor{green!10}-1.691 $\rightarrow \mathbb{H}^d$ &  \makecell{$D_{avg}=0.054$\\${\rm MAP}=0.869$} & \makecell{$D_{avg}=0.057$ \\ ${\rm MAP}=0.833$} & \makecell{$\bm{{\rm D_{avg}}=0.050}$ \\ $\bm{{\rm MAP}=0.931}$} & Gu et al. \\
Power & \cellcolor{green!10}-2.158 $\rightarrow \mathbb{H}^d$ &  \makecell{${\rm D_{avg}}=0.092$\\$\bm{{\rm MAP}=0.886}$} & \makecell{${\rm D_{avg}}=0.050$ \\ ${\rm MAP}=0.795$} & \makecell{$\bm{{\rm D_{avg}}=0.039}$ \\ ${\rm MAP}=0.844$} & Gu et al. \\
Facebook & \cellcolor{green!10}-3.423 $\rightarrow \mathbb{H}^d$ &  \makecell{${\rm D_{avg}}=0.065$\\${\rm MAP}=0.580$} & \makecell{${\rm D_{avg}}=0.066$ \\ ${\rm MAP}=0.556$} & \makecell{$\bm{{\rm D_{avg}}=0.060}$ \\ $\bm{{\rm MAP}=0.782}$} & Gu et al. \\
CondMat & \cellcolor{green!10}-2.991 $\rightarrow \mathbb{H}^d$ & ${\rm MAP}=0.356$ & - & $\bm{{\rm MAP}=0.799}$ & Nickel, Kiela \\
\bottomrule
\end{tabular}
\end{footnotesize}
\caption{{\small Comparison with recently published benchmark embeddability results~\cite{gu,NK17}.}}
\end{table*}
}

\paragraph{Hyperparameters} The size of the local neighborhoods $\mathcal{N}_R(v)$ over which we compute the 3-regular score (determined by the neighborhood radius $R$) is the central hyperparameter in our analysis. Choosing $R$ too small might leave us with too little information to properly evaluate growth rates, whereas a large $R$ limits scalability. First, note that for $R<3$, $\gamma_{EE} = \gamma_{LE}$ in the regularized graph $G_3$. However, for $R\geq 3$ we always have $\gamma_{EE} \neq \gamma_{LE}$. Consequentially, we require $R \geq 3$. Next, we investigated experimentally if an analysis with larger neighborhood radii reveals additional geometric information by computing 3-regular scores for three data sets with different predicted geometric priors for $R \in \lbrace 3,4,5,10\rbrace$. For the ($\sqrt{N} \times \sqrt{N}$)-\textsc{lattice}, the 3-regular score predicts uniformly ${\rm sign}(\kappa)=0$, i.e. $\mathbb{R}^d$ to be the most suitable embedding space. For \textsc{WS} we observe ${\rm sign}(\kappa)=-1$ across all choices of $R$, predicting hyperbolic embeddability. Finally, for \textsc{Cities} we observe ${\rm sign}(\kappa)=1$ across the different neighborhood radii. In consequence, the 3-regular scores reported above are all computed for $R=3$ to maximize scalability.

\section{Discussion}
In this paper, we introduced a framework for determining a suitable embedding curvature for relational data. Our approach evaluates local neighborhood growth rates based on which we approximate suitable embedding curvatures. We provide theoretical guarantees for canonical graphs and introduce a statistic that efficiently aggregates local growth information, rendering the method applicable to heterogeneous, large-scale graphs (both weighted and unweighted). Moreover, we compare the 3-regular score with commonly used notions of discrete curvature in terms of their ability to measure embeddability. We find that discrete curvature is suitable for detecting hyperbolicity, but not for approximating non-negative sectional curvature. This implies that the 3-regular score is better suited for model space selection.\\
Contrary to related embeddability methods, our approach is purely combinatorial, circumventing the need to solve costly large-scale optimization problems. Furthermore, the method does not make any a priori assumption on the dimensionality of the embedding space as opposed to related approaches that impose dimensionality constraints or fix the dimension of the target space. Additionally, the locality of the approach confines the analysis to a small subset of the graph at any given time, allowing for a simple parallelization of the method. This increases the algorithm's scalability significantly.\\
Our results tie into the more general problem of finding data representations that reflect intrinsic geometric and topological features. This problem is three-fold: It requires us to determine (i) the sign of the curvature, which in turn determines the model space, i.e. whether to embed in hyperbolic ($\mathbb{H}^d$), spherical ($\mathbb{S}^d$) or Euclidean space ($\mathbb{R}^d$). Furthermore, (ii) the value of the curvature, which determines local and global geometric parameters of the embedding space, such as distance and angle relations in geodesic triangles and lastly (iii) the dimension of the embedding space. The present work mostly focuses on (i) as we restrict our analysis to canonical Riemannian manifolds with constant sectional curvature ($\kappa \in \lbrace \pm 1, 0 \rbrace$). By combining this approach with MDS-style embedding methods~\cite{gu,sala,bronstein} we could determine the value of the curvature (problem (ii)) also. Hereby, prior knowledge of the sign of the curvature determines the metric of the target space (with the curvature value as hyperparameter) and therefore a suitable objective function to feed into an MDS-style framework. Such a pre-analysis with combinatorial methods should significantly narrow down the search space of suitable curvature values and therefore reduce the overall computational cost. The investigation of such extensions is left for future work.

\section*{Acknowledgements}
The author thanks Maximilian Nickel for helpful discussions at early stages of this project.

\bibliographystyle{plainnat}
\bibliography{ref}
\clearpage

\onecolumn{
\appendix
\section{Regularization}
We perform a regularization of our graphs (see section 3) which allows for an efficient characterization of local neighborhood growth rates and in turn, the local geometry. In this appendix, we provide more details on the implementation and the theoretical guarantees of the regularization. 

Fig.~\ref{fig:G-G3} shows the regularization schematically. For each vertex $v$ in the graph, we enforce a uniform node degree of 3 within its 1-hop neighborhood $\mathcal{N}_1(v)$. Hereby auxiliary vertices $a$ are inserted and modified edges reweighed. The regularization algorithm is given in Alg.~\ref{alg.reg}.
\begin{algorithm}[ht]
 \caption{Regularization}
  \label{alg.reg}
  \begin{algorithmic}[1] 
  \State Input: $G=\lbrace V(G), E(G) \rbrace$
  \State $\epsilon \gets \max_{e \in E} \omega(e)$
	\For {$v \in V$}		
			\State $\mathcal{N}(v) \gets \lbrace u \in V: v \sim u \rbrace$ \Comment{Neighborhood of $v$.}
			\If {${\rm deg}(v)==1$} \Comment{Leaf: $\mathcal{N}(v)=\lbrace u \rbrace$}
				\State \textsc{Create}($a^0,a^1,a^2,a^3$) \Comment{auxiliary nodes}
				\State $\omega(u,v) \gets \omega(u,v)/2$
				\State $\omega(v,a^0), \; \omega(v,a^1) \gets \epsilon/4$
				\State $\omega(a^i,a^{i+k}) \gets \epsilon/4$ for $i=0,1$; $k=1,2$
			\ElsIf {${\rm deg}(v)==2$} \Comment{Chain: $\mathcal{N}(v)=\lbrace u_1, u_2 \rbrace$}
				\State \textsc{Create}($a^0,a^1,a^2$) \Comment{auxiliary nodes}
				\State $\omega(u_1,v) \gets \omega(u_1,v)/2$
				\State $\omega(v,a^i) \gets \omega(u_2,v)/2$ for $i=0,1$
				\State $\omega(u_2,a^2) \gets \omega(u_2,v)/2$
				\State $\omega(a^0,a^1), \; \omega(a^0,a^2), \; \omega(a^1,a^2) \gets \epsilon/4$
			\ElsIf {${\rm deg}(v)==3$} \Comment{3-regular: $\mathcal{N}(v)=\lbrace u_1, u_2, u_3 \rbrace$}
				\State \textsc{Continue}
			\Else \Comment{Star: $\mathcal{N}(v)=\lbrace u_i \rbrace_{i=1}^{{\rm deg}(v)}$}
				\State \textsc{Create}($a^1,\dots,a^{{\rm deg}(v)}$) \Comment{auxiliary nodes}
				\For {$i = 1, \dots, {\rm deg}(v)$}
					\State  $\omega(a^i,u^i) \gets \omega(u_i,v)/2$
					\State  $\omega(a^i,a^{i+1}) \gets \epsilon/4$
				\EndFor	
			\EndIf
		\EndFor
   \end{algorithmic}
\end{algorithm}
The regularization allows for a quasi-isometric embedding of any graph into a 3-regular graph. We provide the theoretical reasoning below:
\begin{theorem}[Bermudo et al.~\cite{bermudo}]
$G \hookrightarrow^\phi G_3$ is a $(\epsilon + 1, \epsilon)$-quasi-isometric embedding, i.e.
\begin{equation}
d_{G_3}(\phi(x),\phi(y)) \leq (\epsilon + 1)d_G(x,y) + \epsilon \; .
\end{equation}
\end{theorem}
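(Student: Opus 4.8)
The plan is to prove the stated inequality by an explicit path-lifting argument, together with the (unstated but necessary) reverse bound obtained by a collapsing projection, so that $\phi$ is genuinely a $(\epsilon+1,\epsilon)$-quasi-isometric embedding rather than merely Lipschitz. First I would fix the map: $\phi$ sends each original vertex $v\in V(G)$ to a designated representative $\phi(v)\in V(G_3)$ — namely $v$ itself when ${\rm deg}(v)\le 3$, and one fixed ring node $a^{1}$ of its gadget when ${\rm deg}(v)>3$ — since only the original vertices carry the $d_G$-distances that must be controlled. The auxiliary vertices $a^i$ and the reweighted edges are precisely those produced by Alg.~\ref{alg.reg}, so every edge of $G_3$ has weight that is either half of some original weight $\omega(e)\le\epsilon$ or an auxiliary weight $\epsilon/4$; this uniform description of $E(G_3)$ is what makes the length accounting tractable.

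For the upper bound I would first isolate a per-edge realization lemma: for each $e=(u,v)\in E(G)$ there is a path in $G_3$ from $\phi(u)$ to $\phi(v)$ of total weight at most $\omega(e)+c\,\epsilon$ for a small absolute constant $c$, obtained by following the halved copies of $e$ together with a bounded number of the $\epsilon/4$-auxiliary edges inserted at $u$ and $v$. Concatenating these realizations along a $d_G$-geodesic $x=v_0,\dots,v_k=y$ then yields a $G_3$-walk of weight at most $\sum_i\omega(v_{i-1}v_i)+k\,c\,\epsilon=d_G(x,y)+k\,c\,\epsilon$. The crux of the upper bound is converting the additive $k\,c\,\epsilon$ into the claimed multiplicative-plus-additive form: since each interior edge traversed has weight $\omega(e)$ bounded below by a quantity comparable to $\epsilon$ (exactly so in the unweighted case $\omega\equiv 1$, $\epsilon=1$, and after normalisation in the weighted case), each interior term $c\,\epsilon$ is dominated by $\epsilon\cdot\omega(e)$ up to constants, so the interior contributions fold into the factor $(\epsilon+1)$ while only the two endpoint gadgets at $x$ and $y$ survive as the single additive $\epsilon$.

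For completeness I would then establish the matching lower half by collapsing. Define $\pi:V(G_3)\to V(G)$ sending each auxiliary vertex back to the original vertex whose gadget created it and fixing each original vertex; because $\pi$ maps every $G_3$-edge either to a $G$-edge or to a zero-length self-loop of bounded weight, it is Lipschitz, so any $G_3$-path between $\phi(x)$ and $\phi(y)$ projects to a $G$-walk of comparable length. This yields a bound of the form $d_G(x,y)\le(\epsilon+1)\,d_{G_3}(\phi(x),\phi(y))+O(\epsilon)$, completing the quasi-isometry in the sense $c_M=O(1+\epsilon)$, $c_A=O(\epsilon)$.

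The hard part will be the star case (${\rm deg}(v)>3$), where $v$ is replaced by a ring of ${\rm deg}(v)$ auxiliary nodes joined by $\epsilon/4$-edges: a $d_G$-geodesic entering at $u_i$ and leaving at $u_j$ forces the lifted path to traverse an arc of this ring whose length can grow with ${\rm deg}(v)$, which naively breaks the clean additive constant in the per-edge lemma. Resolving this is exactly where I would invoke the construction of Bermudo et al.~\cite{bermudo}: one must either choose the cyclic ordering of the $a^i$ and the representative $\phi(v)$ so that each pass-through uses only $O(1)$ ring edges, or charge the traversed ring arc against the edge weights meeting at $v$ (each bounded below by a quantity comparable to $\epsilon$) so that the ring contribution is again absorbed into the multiplicative factor $(\epsilon+1)$ rather than the additive $\epsilon$. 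I expect this bookkeeping at high-degree vertices — not the overall lifting/collapsing dichotomy — to be the only genuinely delicate point of the argument.
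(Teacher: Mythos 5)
First, be aware that the paper does not actually prove this statement: it is imported verbatim from Bermudo et al.~\cite{bermudo}, and the only content in Appendix~A under this theorem is the derivation of the constants $c_A=O(\epsilon)$ and $c_M=O(1+\epsilon)$ \emph{from} the stated inequality. Your attempt at a self-contained lifting/collapsing proof is therefore doing strictly more than the paper does, and the overall architecture (a per-edge realization lemma for the upper bound, a Lipschitz collapsing projection $\pi$ for the reverse bound) is the right shape for such an argument.

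That said, there are two genuine gaps. The first is the one you flag yourself and then do not close: at a star vertex of degree $D$, a geodesic entering at $u_i$ and leaving at $u_j$ must traverse an arc of the auxiliary ring, which in the worst case consists of $\Theta(D)$ edges of weight $\epsilon/4$ (Alg.~\ref{alg.reg}), contributing an additive $\Theta(D\epsilon)$ \emph{per pass-through}. Since every original edge weight satisfies $\omega(e)\le\epsilon$ by the definition $\epsilon=\max_{e}\omega(e)$, this overhead can be charged neither to $(\epsilon+1)\,\omega(e)$ for the two incident edges nor to the single additive $\epsilon$; and ``invoking the construction of Bermudo et al.'' at exactly this point is circular, because the theorem being proved \emph{is} the guarantee of that construction. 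Second, your mechanism for folding the interior overhead $k\,c\,\epsilon$ into the multiplicative factor requires the traversed edge weights to be bounded \emph{below} by a quantity comparable to $\epsilon$; but $\epsilon$ is the \emph{maximum} edge weight, so in a weighted graph interior edges can be arbitrarily lighter than $\epsilon$ and the claimed domination $c\,\epsilon\le\epsilon\,\omega(e)$ fails. Both points require either a different weighting of the auxiliary edges (e.g., ring edges scaled by $1/\mathrm{deg}(v)$) or an explicit normalization hypothesis before the sketch becomes a proof; as written, the proposal establishes the bound only for bounded-degree, unit-weight graphs.
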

From this we can derive the following additive distortion:
\begin{equation*}
\vert  d_{G_3} - d_G \vert \leq \vert (\epsilon + 1) d_G + \epsilon - d_G \vert = \vert \epsilon (\underbrace{d_G}_{\leq {\rm diam}(G)} + 1) \vert  \leq O(\epsilon) \; ,
\end{equation*}
For the multiplicative distortion we have
\begin{equation*}
d_G \leq  d_{G_3} \leq (1 + \epsilon) d_G + \epsilon \\
\Rightarrow \frac{1}{1+ \epsilon} d_G \leq d_{G_3} \leq (\underbrace{d_G}_{\leq {\rm diam}(G)} + 1) (1+\epsilon) \leq O(1+\epsilon) \; .
\end{equation*}
This gives $c_A=O(\epsilon)$ and $c_M=O(1+\epsilon)$ as given in the main text.
%


\section{Neighborhood growth rates in canonical graphs}
We restate the result from the main text:
\begin{theorem}[Neighborhood growth of canonical graphs]
For $3 \leq R \ll N$, every $R$-neighborhood in 
\begin{enumerate}
\item a $b$-regular tree is exponentially expanding;
\item an ($\sqrt{N} \times \sqrt{N}$)-lattice is linearly expanding;
\item an $N$-cycle is sublinearly expanding.
\end{enumerate}
\end{theorem}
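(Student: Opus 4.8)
The plan is to exploit the regularity of each graph to reduce the claim to a single representative root and then, working in the regularized graph $G_3$, to evaluate $|\mathcal{N}_R(v)|$ as a cumulative sum of sphere (frontier) sizes $|\partial\mathcal{N}_r(v)|$, where $\partial\mathcal{N}_r(v) = \mathcal{N}_r(v)\setminus\mathcal{N}_{r-1}(v)$, and compare against $\gamma_{EE}(v,R)$ and $\gamma_{LE}(v,R)$. For the $N$-cycle and the interior of the $(\sqrt{N}\times\sqrt{N})$-lattice, vertex-transitivity makes all relevant neighborhoods congruent, so one root suffices; the hypothesis $R \ll N$ ensures the ball never reaches the lattice boundary nor wraps around the cycle, letting me ignore boundary and global-topology effects. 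For the $b$-regular tree the structure is self-similar away from the leaves, so an internal root gives the representative count, while $R \ll N$ guarantees that the exponentially branching bulk is reachable from any vertex. In each case I would first record how the regularization of Appendix A acts locally --- leaving a $3$-regular tree essentially unchanged, replacing each degree-$2$ chain vertex of the cycle by a triangle, and each degree-$4$ lattice vertex by a $4$-ring --- and then read off the frontier growth.

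The tree and the cycle are the two easy extremes. In a tree there are no cycles, so the parent--child relation forces the frontier to grow at least geometrically with ratio at least $2$: after regularization each internal vertex still has (at least) two outward children, whence $|\mathcal{N}_R(v)| \ge \gamma_{EE}(v,R)$, with equality in the canonical $b=3$ case, establishing exponential expansion. In the cycle, by contrast, a ball can grow in only two global directions, and the regularizing triangles add a bounded amount of local structure per vertex; hence $|\partial\mathcal{N}_r(v)|$ stays bounded by an absolute constant and $|\mathcal{N}_R(v)| = O(R)$. Since $\gamma_{LE}(v,R) = |v| + \tfrac{3}{2}R(R+1)$ is quadratic, a linear ball lies strictly below it for all sufficiently large $R$; the admissible threshold $R \ge 3$ is then confirmed by a direct count at the smallest radii, yielding sublinear expansion.

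The lattice is the delicate case and the main obstacle. The $\ell_1$-sphere of radius $r$ in the planar lattice carries $\Theta(r)$ vertices, so the frontier is linear and the ball quadratic --- the correct order for linear expansion. The lower bound $|\mathcal{N}_R(v)| \ge \gamma_{LE}(v,R)$ is the easy direction, as the four coordinate directions already supply at least $3r$ frontier vertices at radius $r$. The hard direction is the strict upper bound $|\mathcal{N}_R(v)| < \gamma_{EE}(v,R)$ that rules out exponential expansion: a naive count of the \emph{un}regularized lattice gives $|\mathcal{N}_R| \approx 2R^2 + 2R + 1$, which already \emph{exceeds} the exponential threshold $3\cdot 2^R - 2$ at $R=3$. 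This is exactly why the regularization is indispensable --- replacing each degree-$4$ vertex by a $4$-ring lengthens the path distance between original lattice vertices, so that fewer of them fall within radius $R$ and the count is pushed back below $\gamma_{EE}$. The crux of the proof is therefore to quantify this distance stretching in $G_3$ sharply enough to secure $|\mathcal{N}_R(v)| < \gamma_{EE}(v,R)$ for every $R \ge 3$, while keeping the added auxiliary vertices few enough that the lower bound $\gamma_{LE}$ still holds. The tightest point of this two-sided estimate is the boundary radius $R = 3$, which also accounts for the hypothesis $R \ge 3$: for $R < 3$ one has $\gamma_{EE} = \gamma_{LE}$ and the three regimes cannot be told apart.
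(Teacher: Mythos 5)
Your overall strategy coincides with the paper's: regularize, use the periodicity of each canonical graph to reduce to a single representative root, and compare the cumulative frontier counts $\sum_r \vert \partial\mathcal{N}_r(v)\vert$ against $\gamma_{EE}$ and $\gamma_{LE}$. Your treatment of the tree and the cycle is correct and essentially identical to the paper's (the paper computes the cycle frontier explicitly as $\alpha(r)\in\{2,3\}$ with period $3$, and the tree frontier as $b\cdot\alpha(r)$ with $\alpha(r)\in\{\tfrac{3}{2}2^{r-1},2^r\}$, but the conclusions and the reasoning behind them are the same as yours). You also correctly diagnose why $R\geq 3$ is needed and make the sharp observation that the unregularized lattice count $2R^2+2R+1=25$ already exceeds $\gamma_{EE}=3\cdot 2^3-2=22$ at $R=3$, so the distance stretching caused by the $4$-rings is genuinely load-bearing.

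However, the lattice case is left with a real gap: you explicitly name the estimate $\vert\mathcal{N}_R(v)\vert<\gamma_{EE}(v,R)$ as ``the crux'' and then do not prove it. A proof must actually produce the frontier count in the regularized lattice $G_3$; the paper does this by reading off the periodic structure, obtaining $\vert\partial\mathcal{N}_r(v)\vert = 4\,\alpha(r)$ with $\alpha(r)=r-1$ when $r\equiv 1 \pmod 3$ and $\alpha(r)=r$ otherwise. From this explicit formula both directions follow by summation for $R\geq 3$: the lower bound $\sum_{r=1}^R 4\alpha(r)\geq \sum_{r=1}^R 3r$ (note this holds only \emph{cumulatively} for $R\geq 3$ --- at $r=1$ the frontier contributes $0<3$, and at $R=2$ one has $8<9$ --- so your claim that the four coordinate directions supply ``at least $3r$ frontier vertices at radius $r$'' is false as a per-radius statement and must be replaced by the cumulative one), and the upper bound $\vert v\vert+\sum_{r=1}^R 4\alpha(r)<\vert v\vert+3(2^R-1)$, which is tightest at $R=3$ where the two sides are $\vert v\vert+20$ versus $\vert v\vert+21$. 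Without deriving some such explicit or tightly bounded frontier formula in $G_3$, the claim that the lattice is linearly rather than exponentially expanding remains unproven, and this is precisely the step your proposal defers.
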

\begin{proof}
\begin{figure}[ht]
\centering
\includegraphics[width=0.6\textwidth]{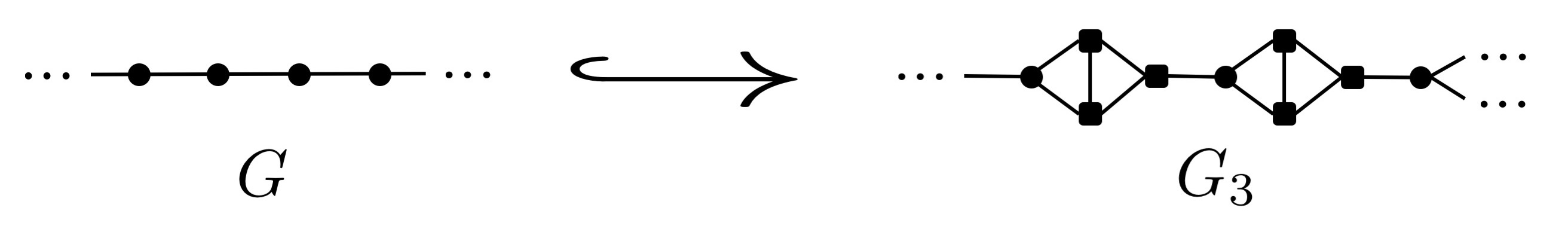}
\caption{Regularization of $N$-cycle.}
\label{fig:reg-cycle}
\end{figure}
The regularization introduces edge weights (see Alg.~\ref{alg.reg}), however, due to the periodic structure of the canonical graphs, those weights are uniform (for lattices and trees) or up to an additive error of $\frac{\epsilon}{4}$ uniform (for cycles). Therefore, we can renormalize the edge weights and analyze the regularized graphs as unweighted graphs. For cycles, the residual additive error does not affect the neighbor count and can therefore be neglected.

Consider first (c) an $N$-cycle. Due to the periodic structure of the \emph{chains} in the regularized graph (see Fig.~\ref{fig:reg-cycle}), there are always either two or three vertices at a distance $r$ from the root, in particular, we have
\begin{align*}
\vert \mathcal{N}_R(v) \vert &= 1 + \sum_{r=1}^R \alpha (r) \; , \\ 
\alpha(r) &=\begin{cases}
2, &{\rm mod(r,3)=0} \\
3, &{\rm else}
\end{cases} \; .
\end{align*}
It is clear that $\sum_{r=1}^R \alpha (r) < 3 R$, i.e. the growth is sublinear.

\begin{figure}[ht]
\centering
\includegraphics[width=0.4\textwidth]{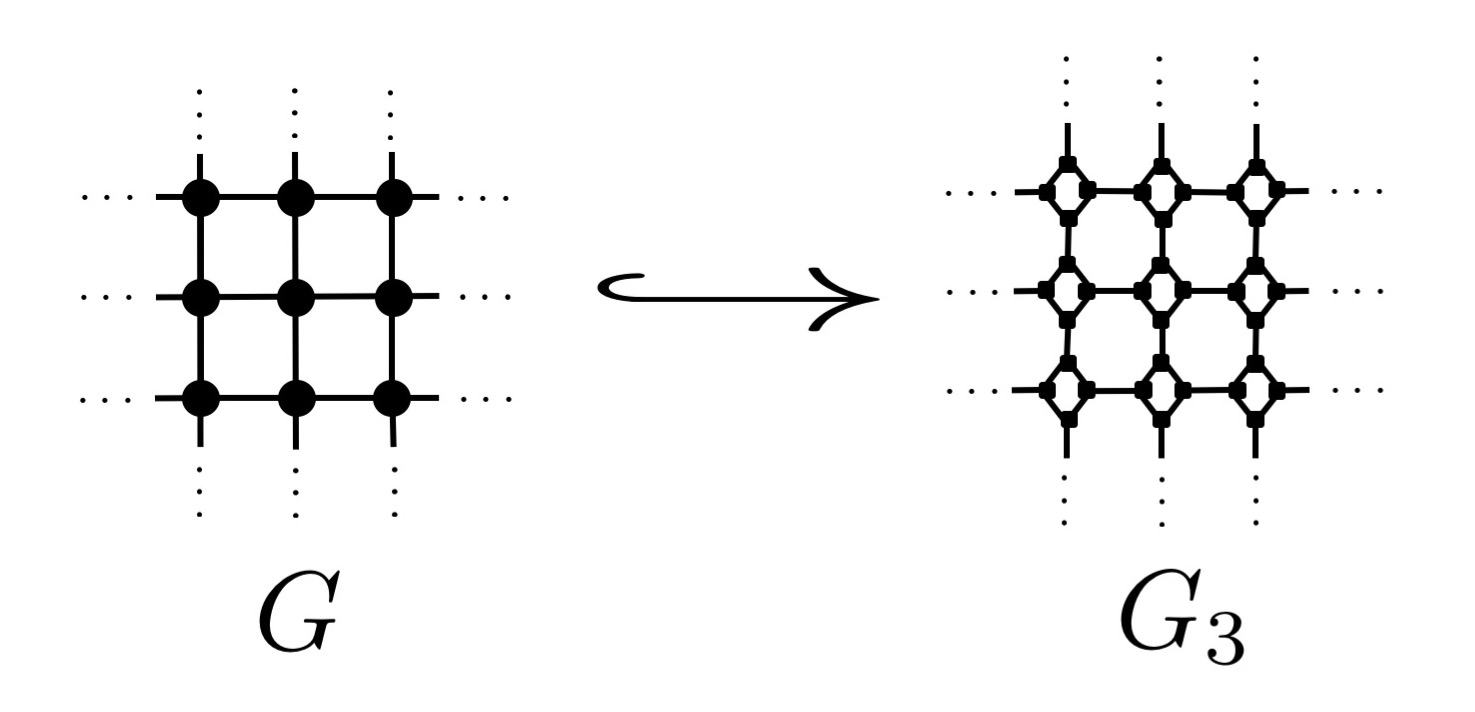}
\caption{Regularization of ($\sqrt{N} \times \sqrt{N}$)-cycle.}
\label{fig:reg-lat}
\end{figure}
Next, consider (b) a ($\sqrt{N} \times \sqrt{N}$)-lattice. From the periodicity of the regularized graph (see Fig.~\ref{fig:reg-lat}), we see that the number of nodes at distance $r$ from the root grows linearly in $r$. In particular, we have
\begin{align*}
\vert \mathcal{N}_R(v) \vert &= \vert v \vert + \sum_{r=1}^R {\rm deg}(v) \alpha(r) \; , \\ 
\alpha(r) &=\begin{cases}
r-1, &{\rm mod(r,3)=1} \\
r, &{\rm else}
\end{cases} \; .
\end{align*}
Since ${\rm deg}(v)=4$, we have ${\rm deg}(v) \alpha(r) \geq 3r$, i.e. the lattice expands linearly.
\begin{figure}[ht]
\centering
\includegraphics[width=0.5\textwidth]{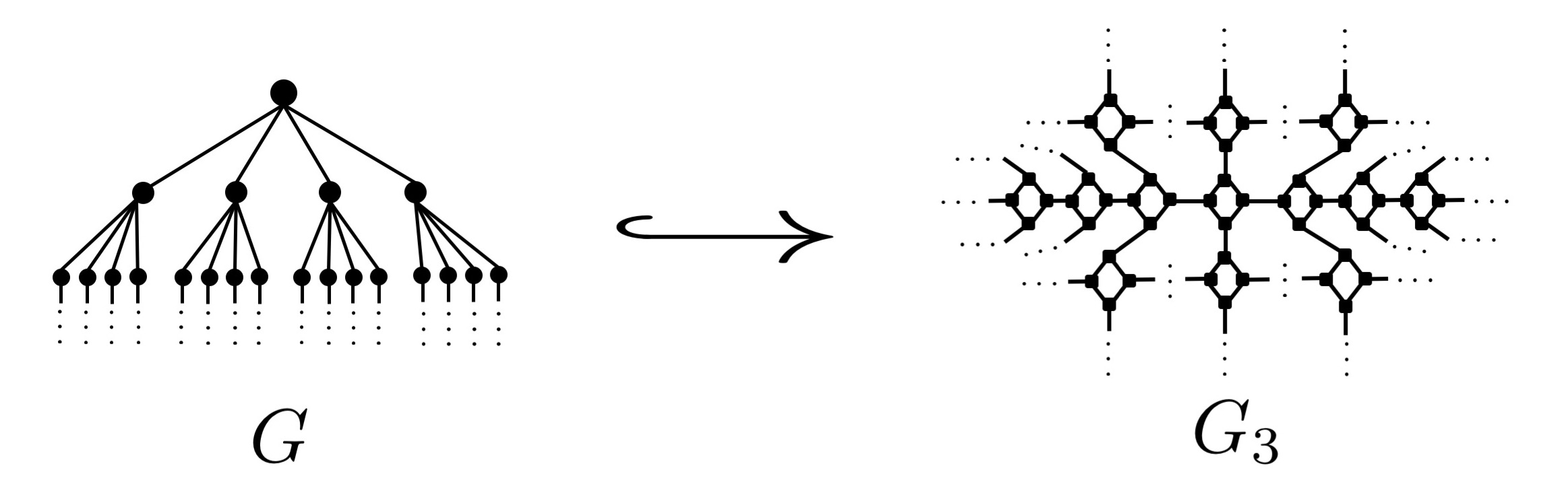}
\caption{Regularization of $b$-ary tree (here $b=4$).}
\label{fig:reg-tree}
\end{figure}
Finally, consider (a) a $b$-ary tree. We first consider the case $b=3$, i.e., a ternary tree. Note that this structure is invariant under our regularization. Since every node has exactly two children, we get the following growth rate:
\begin{align*}
\vert \mathcal{N}_R (v) \vert =  1 + \sum_{r=1}^R 3 \cdot 2^{r-1} = \gamma_{EE} \; ,
\end{align*}
i.e., the ternary tree expands exponentially. Now consider general $b$-ary trees with $b>3$. The building blocks of the periodic structure unfolding are $b$-rings (see Fig.~\ref{fig:reg-tree}). On each level, the nodes have either two children or two nodes have a combined three children, if the $b$-rings close. This results in the following growth rate:
\begin{align*}
\vert \mathcal{N}_R(v) \vert &= \vert v \vert + \sum_{r=1}^R b \cdot \alpha(r) \; , \\
 \alpha(r)&=\begin{cases}
\frac{3}{2} \cdot 2^{r-1}, &{\rm mod(\lceil \frac{r}{2}	\rceil,b)=b-1} \\
2^r, &{\rm else}
\end{cases} \; .
\end{align*}
Since $3 < b$ and $2^{r-1} < \alpha(r)$, all $b$-regular trees expand exponentially.
\end{proof}

\section{Comparison with other discrete curvatures}
We restate the result from the main text (Thm.~\ref{thm:ric}):
\begin{theorem}
At any node $v$, we have
\begin{enumerate}
\item ${\rm Ric_O}(v)<0$ and ${\rm Ric_F}(v)<0$ in a $b$-regular tree,
\item ${\rm Ric_O} \leq 0$ and ${\rm Ric_F}(v)<0$ in an ($\sqrt{N} \times \sqrt{N}$)-lattice, and
\item ${\rm Ric_O}=0$ and ${\rm Ric_F}=0$ in an $N$-cycle.
\end{enumerate}
\end{theorem}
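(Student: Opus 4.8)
The plan is to handle the two curvatures separately, since $\mathrm{Ric_F}$ is pure degree bookkeeping whereas $\mathrm{Ric_O}$ requires solving (or bounding) an optimal transport problem on each edge. In all three graphs the vertices are, away from the boundary, degree-regular, so the node-based curvatures collapse to a single edge computation: every vertex of an $N$-cycle has degree $2$, every interior vertex of the grid has degree $4$, and every vertex of the $b$-regular tree has degree $b$. Hence it suffices to evaluate ${\rm Ric_F}(u,v)$ and ${\rm Ric_O}(u,v)$ on one representative edge of each graph and average over the incident edges.

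For Forman-Ricci I would simply substitute degrees into ${\rm Ric_F}(u,v)=4-{\rm deg}(u)-{\rm deg}(v)$. The cycle gives $4-2-2=0$; the lattice gives $4-4-4=-4<0$; the tree gives $4-2b<0$ for $b\ge 3$. Since the incident edges at a node are identical by regularity, dividing the sum by ${\rm deg}(v)$ preserves these signs and yields the three Forman statements at once (boundary edges of the finite grid only make the sum more negative).

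The substance lies in $\mathrm{Ric_O}(u,v)=1-W_1(m_u,m_v)$. First I would establish nonpositivity uniformly: all three graphs are triangle-free (the grid and even cycle are bipartite, the tree is acyclic), so the number of common neighbors $\#(u,v)$ of any edge vanishes, and the Jost--Liu upper bound, which for a triangle-free edge reduces to $\kappa(u,v)\le 0$, gives ${\rm Ric_O}(u,v)\le 0$ for \emph{every} edge. Averaging already yields the lattice claim ${\rm Ric_O}\le 0$ without examining the boundary. To pin down exact values I would produce matching primal and dual certificates through Kantorovich duality, $W_1(m_u,m_v)=\sup_{f\ 1\text{-Lip}}\bigl(\int f\,dm_u-\int f\,dm_v\bigr)$. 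For the cycle and the grid interior, the coupling that pushes each mass atom one step across the edge shows $W_1\le 1$, while the coordinate function along the edge direction is $1$-Lipschitz and attains difference $1$, so $W_1=1$ and ${\rm Ric_O}=0$. For the tree, the coupling that sends one of the $b-1$ outer neighbors of $u$ onto $v$ (cost $1$) and the remaining $b-2$ across the edge (cost $3$ each) gives $W_1\le 3-4/b$; the signed-distance potential $f(u_i)=-1,\,f(u)=0,\,f(v)=1,\,f(v_j)=2$ is $1$-Lipschitz on the tree metric and realizes exactly $3-4/b$, so $W_1=3-4/b>1$ and ${\rm Ric_O}(u,v)=4/b-2<0$ for $b\ge 3$.

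The main obstacle is the Ollivier computation, and specifically the \emph{lower} bounds on $W_1$ needed to turn transport plans into exact (and, for the tree, strictly negative) curvature values: an explicit coupling only bounds $W_1$ from above, so the delicate part is exhibiting $1$-Lipschitz dual potentials that remain consistent across the ``far'' support pairs $u_i,v_j$ at distance $3$, not merely across single edges; here one verifies the inequality on the finite support and invokes a McShane extension to the whole metric. A secondary care point is fixing the degree convention for the tree (degree $b$ throughout) and relying on the Jost--Liu upper bound rather than an exact evaluation at the finite lattice's boundary, so that the node average is certified nonpositive without boundary case analysis.
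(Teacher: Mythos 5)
Your proposal is correct and, for the Forman part and the nonpositivity claims, coincides with the paper's argument (degree substitution for ${\rm Ric_F}$; the triangle-free/Jost--Liu upper bound for ${\rm Ric_O}\le 0$ on the lattice). Where you genuinely diverge is in the Ollivier computations for the cycle and the tree: the paper stays entirely inside the Jost--Liu inequalities, getting ${\rm Ric_O}=0$ on the cycle by squeezing between the two-sided bound (the lower bound $-2(1-\tfrac12-\tfrac12)_+=0$ does the work) and getting only ${\rm Ric_O}(u,v)\le 0$ on the tree from the ``edges of a tree are nonpositively curved'' lemma. You instead produce matching primal couplings and $1$-Lipschitz dual potentials, which yields the exact values $W_1=1$ on the cycle and $W_1=3-4/b$ on the tree, hence ${\rm Ric_O}(u,v)=4/b-2<0$. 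This buys something real: the theorem as stated asserts \emph{strict} negativity of ${\rm Ric_O}$ on the tree, and the paper's proof only delivers $\le 0$, so your explicit Kantorovich-duality computation actually closes a gap in the paper's own argument rather than merely reproving it. Two small points of care: your parenthetical ``even cycle is bipartite'' is not needed and slightly misleading --- what you need is triangle-freeness, which holds for all $N\ge 4$ regardless of parity --- and your degree convention (every vertex of the $b$-regular tree has degree $b$) differs from the paper's appendix (which treats a $b$-ary tree with degree $b+1$); the signs are unaffected either way, but you should state which convention you adopt since the formula $4/b-2$ changes to $4/(b+1)-2$ under the other one.
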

Before proving the theorem, recall the node-based curvature notions for $v \in V(G)$:
\begin{align*}
{\rm Ric_O}(v) &= \frac{1}{{\rm deg}(v)} \sum_{(u,v)} {\rm Ric_O}(u,v) = \frac{1}{{\rm deg}(v)} \sum_{(u,v)} 1 - W_1(m_u, m_v) \\
{\rm Ric_F}(v) &= \frac{1}{{\rm deg}(v)} \sum_{(u,v)} {\rm Ric_F}(u,v) = 4 - {\rm deg}(v) - \sum_{(u,v)} \frac{{\rm deg(u)}}{{\rm deg}(v)} \; .
\end{align*}
Furthermore recall the following curvature inequalities for ${\rm Ric_O}$:
\begin{lem}\cite{jost-liu}\label{lem:ric-inequ}
${\rm Ric_O}$ fulfills the following inequalities:
\begin{enumerate}
\item If $(u,v)$ is an edge in a tree, then ${\rm Ric_O}(u,v) \leq 0$.
\item For any edge $u,v$ in a graph, we have
\begin{align*}
-2 \left( 1 - \frac{1}{{\rm deg}(u)} - \frac{1}{{\rm deg}(v)} \right)_{+} \leq {\rm Ric_O}(u,v) \leq \frac{\# (u,v)}{\max \lbrace {\rm deg}(u), {\rm deg}(v) \rbrace} \; ,
\end{align*}
where $\#(u,v)$ denotes the number of common neighbors (or joint triangles) of $u$ and $v$.
\end{enumerate}
\end{lem}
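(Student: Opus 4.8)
My plan is to reduce everything to the Kantorovich--Rubinstein duality for the transport distance,
$$W_1(m_u,m_v)=\sup_{f:\,{\rm Lip}(f)\le 1}\sum_{x}f(x)\bigl(m_u(x)-m_v(x)\bigr),$$
and to exploit that ${\rm Ric_O}(u,v)=1-W_1(m_u,m_v)$ turns each direction of the claim into a one-sided estimate on $W_1$. An \emph{upper} bound on ${\rm Ric_O}(u,v)$ is a \emph{lower} bound on $W_1$, which I can certify by exhibiting a single $1$-Lipschitz test function; a \emph{lower} bound on ${\rm Ric_O}(u,v)$ is an \emph{upper} bound on $W_1$, which I can certify by exhibiting a single admissible coupling of $m_u$ and $m_v$. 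I would prove the two inequalities of part (2) by these two dual routes and then read off part (1) as a special case. Throughout, write $N(w)$ for the set of neighbors of a vertex $w$ and partition $N(u)\cup N(v)$ into the two edge endpoints, the $\#(u,v)$ common neighbors, and the private neighbors of each side.

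For the upper bound ${\rm Ric_O}(u,v)\le \#(u,v)/\max\{{\rm deg}(u),{\rm deg}(v)\}$, assume without loss of generality ${\rm deg}(u)\le{\rm deg}(v)$ and take the $1$-Lipschitz function $f(x)=d_G\bigl(x,N(u)\bigr)$, the graph distance to the open neighborhood of the lower-degree endpoint. It vanishes on $N(u)$, hence at $v$, at every common neighbor, and at every private neighbor of $u$, while it equals $1$ at $u$ itself and at each private neighbor of $v$ (each of which is adjacent to $v\in N(u)$). Evaluating the duality pairing against $-f$, the $m_u$-side contributes nothing and the $m_v$-side collects mass $1/{\rm deg}(v)$ from $u$ and from each private neighbor of $v$, giving $W_1\ge 1-\#(u,v)/{\rm deg}(v)$ and therefore the claimed bound. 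Part (1) is then immediate: in a tree a common neighbor of $u$ and $v$ would close a triangle, so $\#(u,v)=0$ for every edge and the bound collapses to ${\rm Ric_O}(u,v)\le 0$.

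For the lower bound ${\rm Ric_O}(u,v)\ge -2\bigl(1-1/{\rm deg}(u)-1/{\rm deg}(v)\bigr)_{+}$, I would build an explicit coupling and bound its cost. The construction leaves the overlapping mass at the common neighbors in place at zero cost, matches the mass sitting at $v$ against the deficit at $u$ and at the private neighbors of $v$ over distance $1$, and pushes the residual mass from the private neighbors of $u$ across the edge $(u,v)$, so that no unit of mass travels farther than distance $3$ along a path through $u$ and $v$. Summing the per-unit costs against the uniform weights $1/{\rm deg}(u)$ and $1/{\rm deg}(v)$ yields $W_1\le 3-2/{\rm deg}(u)-2/{\rm deg}(v)$ whenever the right-hand side is nonnegative, which is exactly the stated bound; since additional common neighbors only create more zero-cost matches, the estimate can only improve when triangles are present, so the triangle-free configuration is extremal.

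The main obstacle is this last coupling argument rather than the test-function step. Two points need care. First, the allocation of the residual mass between the distance-$1$ and distance-$3$ routes must be optimized so that the coefficient in front of the transported mass is exactly $2$, matching the constant in the claim; a naive routing through a single hub overshoots. Second, when $1-1/{\rm deg}(u)-1/{\rm deg}(v)\le 0$ --- essentially the cases where $u$ or $v$ is a leaf --- the truncation $(\cdot)_{+}$ is active and the routing above degenerates, so these configurations must be handled separately; there one measure is a Dirac mass, $W_1=1$ exactly, and ${\rm Ric_O}(u,v)=0$, consistent with the bound. Finally, invoking Kantorovich--Rubinstein duality is a standard but genuinely necessary ingredient, since it is what legitimizes certifying $W_1$ from one-sided witnesses.
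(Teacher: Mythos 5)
The paper itself gives no proof of this lemma: it is imported verbatim from Jost and Liu~\cite{jost-liu} and used as a black box in the proof of Theorem~\ref{thm:ric}. So your proposal necessarily differs from the paper --- you are supplying an argument where the paper supplies a citation --- and the route you choose (Kantorovich--Rubinstein duality, a $1$-Lipschitz witness for the upper bound, an explicit coupling for the lower bound) is essentially a reconstruction of Jost--Liu's original proof strategy. Your upper-bound argument is complete and correct: with $\deg(u)\le\deg(v)$, the function $f=d_G(\cdot,N(u))$ is $1$-Lipschitz, vanishes on $N(u)$ (hence at $v$, at every common neighbor, and at every private neighbor of $u$), and equals $1$ at $u$ and at every private neighbor of $v$, so the dual pairing gives $W_1(m_u,m_v)\ge 1-\#(u,v)/\deg(v)$, i.e.\ ${\rm Ric_O}(u,v)\le \#(u,v)/\max\{\deg(u),\deg(v)\}$; part (1) is then immediate since trees contain no triangles.

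The lower bound is where your write-up is still a sketch, and two points must be tightened before it is a proof. First, the coupling as you literally describe it (``matches the mass sitting at $v$ against the deficit at $u$ and at the private neighbors of $v$'') is the suboptimal allocation: if an amount $x$ of $v$'s mass is routed to $u$, the total cost works out to $3-2/\deg(u)-2/\deg(v)+2x$, so every unit sent $v\to u$ costs an extra $2$. The correct coupling takes $x=0$: send all of $v$'s mass ($1/\deg(u)$) to private neighbors of $v$ at distance $1$, fill the deficit $1/\deg(v)$ at $u$ from private neighbors of $u$ at distance $1$, and ship the remainder $1-1/\deg(u)-1/\deg(v)$ from private neighbors of $u$ to private neighbors of $v$ at distance $\le 3$, giving exactly $W_1\le 3-2/\deg(u)-2/\deg(v)$. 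You flag this issue, but the resolution needs to be written out, since it is the whole content of the bound. Second, your treatment of the truncated regime is incomplete: $1-1/\deg(u)-1/\deg(v)\le 0$ occurs not only when an endpoint is a leaf (where indeed $m_u$ is a Dirac mass and $W_1=1$) but also when $\deg(u)=\deg(v)=2$, where neither measure is a Dirac; there the coupling sending $v$'s half-unit to the private neighbor of $v$ and the private neighbor of $u$'s half-unit to $u$ gives $W_1\le 1$, hence ${\rm Ric_O}(u,v)\ge 0$ as required. Finally, your claim that triangles only improve the estimate is true but deserves a line: keeping $\min\{1/\deg(u),1/\deg(v)\}$ in place at each common neighbor and routing the excess (at most $1/\deg(u)-1/\deg(v)$ per common neighbor, over distance $\le 2$ via $v$) reduces the total cost by $\#(u,v)\left(1/\deg(u)-1/\deg(v)\right)\ge 0$ relative to the triangle-free computation. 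With these repairs your argument is a valid, self-contained proof of the cited lemma.
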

\begin{proof}(Thm.~\ref{thm:ric})
Consider first (3) an $N$-cycle. By Lem.~\ref{lem:ric-inequ}(2) we have for any $(u,v)$ on the right hand side ${\rm Ric_O}(u,v) \leq 0$, since a cycle has no triangles. Furthermore,  the left hand side gives ${\rm Ric_O}(u,v) \geq 0$, since ${\rm deg}(v)={\rm deg}(u)=2$. This implies ${\rm Ric_O}(v)=0$. We also have
\begin{align*}
{\rm Ric_F}(v) = 4 - {\rm deg}(v) - \left(	\frac{{\rm deg}(u_1)}{2} + \frac{{\rm deg}(u_2)}{2} 	\right) = 0\; ,
\end{align*}
since ${\rm deg}(v)={\rm deg}(u_i)=2$.

Next, consider (1) a $b$-ary tree. By Lem.~\ref{lem:ric-inequ}(1), we have $Ric_O(u,v) \leq 0$ and therefore $Ric_O(v) \leq 0$. Moreover, we have
\begin{align*}
{\rm Ric}_F(v) = 4 - (b+1) - \frac{(b+1)(b+1)}{b+1} \leq 0 \; ,
\end{align*}
since by construction $b \geq 2$.

Finally, consider (2) an ($\sqrt{N} \times \sqrt{N}$)-lattice. Since the lattice has no triangles, Lem.~\ref{lem:ric-inequ}(2) gives ${\rm Ric_O}(u,v) \leq 0$ for any edge $(u,v)$ and therefore ${\rm Ric_O}(v) \leq 0$. In addition,
\begin{align*}
{\rm Ric}_F(v) = 4 - {\rm deg}(v) - \left(	\frac{{\rm deg}(u_1)}{4} + \frac{{\rm deg}(u_2)}{4} + \frac{{\rm deg}(u_3)}{4} + \frac{{\rm deg}(u_4)}{4} 	\right) \leq 0 \; ,
\end{align*}
since ${\rm deg}(v) = {\rm deg}(u_i) = 4$ for all $i$.
\end{proof}

\section{Embeddability measures}
We evaluate the quality of embeddings using two computational distortion measures, following the workflow in ~\cite{gu,NK17}. First, we report the average distortion
\begin{align}
D_{avg} = \sum_{1 \leq i \leq j \leq n} \left\vert \left(	\frac{d_{\mathcal{M}}(x_i,x_j)}{d_G(x_i, x_j)}	\right)^2 - 1 \right\vert  \; .
\end{align}
Secondly, we report ${\rm MAP}$ scores that measure the preservation of nearest-neighbor structures:
\begin{align}
{\rm MAP} = \frac{1}{\vert V \vert} \sum_{u \in V} \frac{1}{{\rm deg}(u)} \sum_{i=1}^{\vert \mathcal{N}_1(u) \vert} \frac{\vert \mathcal{N}_1(u) \cap R_{u,i} \vert}{\vert R_{u,i} \vert} \; .
\end{align}
Here, $R_{u,i}$ denotes the smallest set of nearest neighbors required to retrieve the $i^{th}$ neighbor of $u$ in the embedding space $M$. One can show that for isometric embeddings, ${\rm D_{avg}}=0$ and ${\rm MAP}=1$.
}
\end{document}